\newcommand{\assign}{\leftarrow}
\newtheorem{theorem}{Theorem}
\theoremstyle{definition}
\newtheorem{example}[theorem]{Example}
\newcommand{\oea}{$(1 + 1)$~EA\xspace}
\newcommand{\ooeas}{$(1 + 1)$~EAs}
\newcommand{\ooea}{\oea}
\newcommand{\oplea}{$(1+\lambda)$~EA\xspace}
\newcommand{\om}{\textsc{OneMax}\xspace}
\newcommand{\onemax}{\om}
\newcommand{\lo}{\textsc{LeadingOnes}\xspace}
\newcommand{\R}{\ensuremath{\mathbb{R}}}
\newcommand{\N}{\ensuremath{\mathbb{N}}}
\newcommand{\AND}{\textsc{AND}\xspace}
\newcommand{\OR}{\textsc{OR}\xspace}
\newcommand{\NOT}{\textsc{NOT}\xspace}
\newcommand{\eps}{\varepsilon}
\newcommand{\BinVal}{\textsc{BinVal}\xspace}
\begin{document}

\title{Drift Analysis}

\author{Johannes Lengler}
\date{ETH Z\"urich, Switzerland\\[2ex]
\today
}

\maketitle

\sloppy

\begin{abstract}
Drift analysis is one of the major tools for analysing evolutionary algorithms and nature-inspired search heuristics. In this chapter we give an introduction to drift analysis and give some examples of how to use it for the analysis of evolutionary algorithms. 
\end{abstract} 

\section{Introduction}

Drift analysis goes back to the seminal paper of Hajek~\cite{hajek1982hitting}, and has since become ubiquitous in the analysis of Evolutionary Algorithms (EAs). Google Scholar lists more than 100,000 hits for the phrases ``Drift'' and ``Evolutionary Algorithm'', so a comprehensive review of all applications or even just all existing drift theorems is far beyond the scope of this chapter. Instead, the chapter serves two purposes. 

Firstly, it provides a self-contained introduction into drift analysis (Section~\ref{driftsec:elementary}), which is so far missing in the literature.\footnote{A briefer introduction can be found in~\cite{lehre2017theoretical}} This introduction is suitable for graduate students or for theory-affine researchers who have not yet encountered drift analysis. This first part will contain illustrative examples, and will discuss in detail the different requirements of the most basic drift theorems, specifically on additive drift, variable drift, and multiplicative drift. Counterexamples are given to point out when some drift theorems are not applicable, or give poor results.

Secondly, Section~\ref{driftsec:advanced} provides an overview over the most important recent developments in drift analysis, including lower and tail bounds, weak drift, negative drift, and population drift. This section is much more concise, and may also serve as a quick reference for the expert reader.

\section{Basics of Drift Analysis}

\subsection{Motivation}\label{driftsec:motivation}

To analyse the runtime of an evolutionary algorithm (or more generally, any randomised algorithm), one of the most common and successful approaches consists of the following three steps.
\begin{enumerate}[1.]
\item Identify a quantity $X_t$, the \emph{potential} (also called \emph{drift function} or \emph{distance function}), that adequately measures the progress that the algorithm has made after $t$ steps.
\item For any value of $X_t$, understand the nature of the random variable $X_{t} - X_{t+1}$, the one-step change of the potential.
\item Translate the data from step 2 into information about the runtime $T$ of the algorithm, i.e., the number of steps until the algorithm has achieved its goal. 
\end{enumerate}

Drift analysis is concerned with step 3. Generally, good drift theorems require as little information as possible about the potential $X_{t+1}$, and give as much information as possible about $T$. In the basic theorems, we only require (bounds on) the expectation $E[X_{t} - X_{t+1} \mid X_t = s]$ for all $s$, which is called \emph{drift}, in order to derive (bounds on) the expectation $E[T]$. Drift analysis has become a successful theory because the framework above is very general, and good tools for step 3 exist, which apply to a multitude of situations. In contrast, step 1 and 2 often do not generalise from one problem to another. Frequently, step 1 is the part of a runtime analysis that carries the key insight, and it usually requires much more ingenuity than the other steps. On the other hand, step 2, the analysis of $X_{t} - X_{t+1}$, requires arguably less insight. However, step 2 is usually the most lengthy and technical part of a runtime analysis. Therefore, the complexity of a proof can often be substantially reduced if only some basic information like the expectation $E[X_{t} - X_{t+1} \mid X_t =x]$ is needed in step 2.

For evolutionary algorithms, a natural candidate for the potential $X_t$ is the fitness $f(x^{(t)})$ of the best individual in the current population, especially so if the population consists only of a single individual, as for example for \ooeas. In a sense, this fitness measures the ``progress'' until time $t$ since it would exactly correspond to the quality of the output if the algorithm terminated with this generation. However, it is not necessarily the best choice to measure the progress that the algorithm has made towards finding a global optimum. For example, consider the linear fitness function\footnote{We follow the standard convention that for an $n$-dimensional vector $x$, we denote its components with $x_1, \ldots, x_n$.} $f: \{0,1\}^n$ with $f(x) = (n-1)\cdot x_1 + \sum_{i=2}^n x_i$, which puts very large emphasis on the first bit. The optimum (for maximization) is the string $x_{\text{OPT}} = (1,\ldots, 1)$, but the two strings $x_1 = (1,0,0,\ldots,0)$ and $x_2 = (0,1,1,\ldots,1)$ have the same fitness $f(x_1) = f(x_2) = n-1$. However, the string $x_2$ is much more similar to $x_{\text{OPT}}$ than $x_1$, so we should choose a potential that assigns a higher rating to $x_2$ than to $x_1$. We will see later (Example~\ref{driftexample:omlinear}) good choices for the potential in this example.

Historically, in the EA community drift analysis was preceded by the \emph{fitness level method}~\cite{wegener2003methods}. In retrospect it may be regarded as a special case of the Variable Drift Theorem that we will introduce in Section~\ref{driftsec:variable}. Likewise, the \emph{method of expected weight decrease}~\cite{neumann2007randomized} may be regarded as a predecessor of the Multiplicative Drift Theorem in Section~\ref{driftsec:multiplicative}. It is fair to say that the development of drift analysis boosted our understanding of evolutionary algorithms, either by simplifying existing results, or by achieving greater precision, or as a means to obtain qualitatively new results that may not have been achievable with the old techniques. For example, the original proof by Droste, Jansen, and Wegener that the \ooea takes time $O(n\log n)$ on all linear functions needed 7 pages~\cite{DrosteJW02}, while Doerr, Johannsen, and Winzen could reduce the proof to a single page~\cite{doerr2010drift}. To obtain the leading constant with the fitness level method would have been quite challenging and perhaps out of reach. With drift analysis, in a groundbreaking paper Witt~\cite{witt2013tight} could derive the leading constant not only for the standard mutation rate $1/n$, but for any mutation rate $c/n$, where $c$ is a constant, in a proof of $2$-$3$ pages!

\subsubsection{General Setup}\label{driftsec:situation}

Throughout this chapter we will assume that $(X_t)_{t\geq 0}$ is a sequence of non-negative random variables with a finite state space $\mathcal{S} \subseteq \R_0^+$ such that $0 \in \mathcal S$. We will denote the minimum positive state by $s_{\text{min}} := \min(\mathcal{S} \setminus \{0\})$. 
The \emph{stopping time} or \emph{hitting time of $0$} of $(X_t)_{t\geq 0}$ is defined as the smallest $t$ such that $X_t =0$.  We are generally interested in the \emph{drift} $\Delta_t(s) := E[X_{t} - X_{t+1} \mid X_t = s]$, where $t \geq 0$ and $s \in \mathcal S$. 

As with all conditional expectations, $\Delta_t(s)$ is not well-defined if $\Pr[X_t = s] = 0$. So in other words, $\Delta_t(s)$ is undefined for situations that never occur. Obviously, this is not a practical issue, and it is convenient (and common in the community) to be sloppy about such cases. So we will use phrases like ``$\Delta_t(s) \leq 1$ for all $t\geq 0$'' as a shortcut for ``$\Delta_t(s) \leq 1$ for all $t\geq 0$ for which the conditional expectation $\Delta_t(s)$ is well-defined''. 

In Section~\ref{driftsec:advanced} we will often need to work with pointwise drift and filtrations, i.e., we need to condition on the complete history (or at least the current state) of the algorithm, instead of just conditioning on the value of $X_t$. In these cases, we will denote the filtration associated with algorithm's history up to time $t$ by $\mathcal{F}_t$. Moreover, tail bounds will be formulated for a fixed initial search point $X_0 = s_0$. For details and an explanation of the technical terms ``pointwise drift'' and ``filtration'' see the corresponding paragraph in Section~\ref{driftsec:variants} below. 

Throughout the chapter, $f$ will denote a fitness function to be optimised, either maximised or minimised. For a $(1+\lambda)$-algorithm, we will use the convention that $x^{(t)}$ is the search point after $t$ generations.

\subsubsection{Variants}\label{driftsec:variants}

In the literature, terminology may vary between different authors, and there are often slightly different setups considered. We highlight some variants which occur frequently. A reader who is new to drift analysis may skip this section on first reading.

\begin{enumerate}[1.]
\item \textbf{Signs.} We consider the change $X_{t} - X_{t+1}$. In the literature, the difference is sometimes considered with opposite signs, $X_{t+1} - X_{t}$, which is arguably a more natural choice. However, since we consider drift towards zero, with our choice the drift is usually positive instead of negative. Moreover, our choice is more consistent with the established term ``negative drift'', which refers to a drift that points away from the target.
\item \textbf{Markov Chains.} Instead of \emph{any} sequence of random variables, the sequence $X_t$ is sometimes assumed to be a Markov chain, i.e., the state $X_t$ should completely determine the distribution of $X_{t+1}$. While this is a mathematically appealing scenario, it usually does not apply in the context of evolutionary algorithms. For instance, in the example from Section~\ref{driftsec:motivation} above, the information $X_t = n-1$ would tell us that the current fitness is $n-1$, but the two search points $x_1$ and $x_2$ differ in nature. Thus, the subsequent trajectory of search points depends on more information than is contained in $X_t$, and so do the subsequent potentials $X_{t+1}, X_{t+2}, \ldots$. So already in this very simple example, we do not have a Markov chain.

There are quite some papers on the theory of EAs which ignore this point, either accidentally or perhaps consciously for the sake of exposition, since Markov chains are a well-accessible concept. These papers contain drift theorems for a Markov chain $X_t$, but use them for runtime analysis in which $X_t$ is not a Markov chain. So technically speaking, the proofs are not correct. However, it is a purely technical issue: since the Markov property is not really needed for drift theorems, the derived results are still correct. An alternative was used in~\cite{lengler2016drift}, where the authors assumed an underlying Markov process $Y_t$ with arbitrary state space $\mathcal S$, and a function $\alpha: \mathcal S \to \R$. Then they formulated drift theorems for $X_t := \alpha(Y_t)$. This is a more precise description of randomised algorithms, where the internal state (e.g., the current population) is described by $Y_t$, and the real-valued potential is described by $X_t$. It has the advantage that expressions like $E[X_t - X_{t+1} \mid Y_t = s]$ are still well-defined, even if $\Pr[Y_t = s] = 0$. This is especially relevant in continuous domains. For example, assume that $Y_0$ is a real number drawn uniformly at random from $[0,1]$. Then $\Pr[Y_0 = s] = 0$ for all $s \in [0,1]$.
\item \textbf{Filtrations and Pointwise Drift.} We have defined the drift as a random variable that is conditioned on the value of $X_t$, i.e., $\Delta_t(x) = E[X_{t} - X_{t+1} \mid X_t = s]$. Instead, it is also possible to condition on the whole history of $X_t$, or even on the whole history of the algorithm. (Recall that in general, the potential $X_t$ does not completely describe the state of the algorithm at time $t$). In mathematical terms, the set of such histories is described by a \emph{filtration} of $\sigma$-algebras $\mathcal{F}_0 \subseteq \mathcal{F}_1 \subseteq \ldots$, where intuitively the $\sigma$-algebra $\mathcal{F}_t$ contains all the information that is available after the first $t$ steps of the algorithm.\footnote{Mathematically speaking, it is the coarsest $\sigma$-algebra which makes all random choices of the algorithm up to time $t$ measurable.} For example, instead of requiring that $E[X_{t} - X_{t+1} \mid X_t = s]\leq 1$ for all $t \geq 0$, we would ask that $E[X_{t} - X_{t+1} \mid F_t] \leq 1$ for all $t \geq 0$ and all histories $F_t$ up to time $t$ such that $X_t = s$ in $F_t$.\footnote{This is sometimes sloppily described by $E[X_{t} - X_{t+1} \mid X_0,\ldots,X_t]$. However, note that this is not quite correct since it only conditions on the past values of $X_t$, not on the history of the algorithm. In particular, conditioning on $X_0,\ldots,X_t$ usually does not determine the current state of the algorithm (e.g., the current search point or population).} In this case, we also speak of \emph{pointwise drift}, and we will write\footnote{by abuse of notation, for brevity} $E[X_{t} - X_{t+1} \mid \mathcal{F}_t, X_t = s] \leq 1$ to mean that for every history $F$ of the algorithm up to time $t$ with the property $X_t =s$, we have $E[X_{t} - X_{t+1} \mid F] \leq 1$.

Obviously, pointwise drift is a much stronger condition, and requiring such a strong condition in a drift theorem gives a priori a weaker theorem. However, for most applications it does not make a big difference to consider either version. Intellectually, it is arguably easier to imagine a fixed history of the algorithm, and to think about the next step in this fixed setting. Therefore, it is not uncommon in the EA community to formulate drift theorems using filtrations. However, we will also see examples (Example~\ref{driftexample:RLSLO} and~\ref{driftexample:omlinear}) where the weaker condition ``$X_t = s$'' is beneficial.

The basic drift theorems concerned with the expected runtime $E[T]$ can be formulated with either form of conditioning, and in this chapter we choose the stronger form (i.e., with weaker requirements), conditioning on $X_t = s$. However, once the drift theorems include tail bounds, things become more subtle, and it becomes essential to condition on every possible history. Therefore, we will switch to using filtrations and pointwise drift in the last part of the chapter. 
\item \textbf{Infinite Search Spaces.} We assume in this chapter that the state space $\mathcal{S}$ is finite. This makes sense in the context of this book since in discrete optimization the search spaces, and also the state spaces of the algorithms, tend to be finite (though they may be huge). However, there are problems, especially in continuous optimization, in which infinite state spaces are more natural. Generally, all drift theorems mentioned in this chapter still hold if the state space $\mathcal S \subseteq \R_0^+$ is infinite, but bounded.\footnote{Some statements like Theorems~\ref{driftthm:variable} and~\ref{driftthm:multiplicative} additionally require that the infimum $s_{\text{min}} := \inf(S \setminus \{0\})$ is strictly positive.} For unbounded search spaces, things become more complicated. The upper bounds on $E[T]$ in the drift theorems still hold in these cases, while the lower bounds on $E[T]$ fail in general~\cite{lengler2016drift}, as we will discuss briefly after Theorem~\ref{driftthm:additive}. Collections of drift theorems for unbounded spaces can be found in~\cite{kotzing2018first,lengler2016drift}.
\item \textbf{Drift Versus Expected Drift.} Unfortunately, the meaning of the term ``drift'' is somewhat inconsistent in the literature. We have defined it as the \emph{expected} change $E[X_{t} - X_{t+1} \mid X_t = s]$. However, some authors also use ``drift'' to refer to the conditional random variable $X_{t} - X_{t+1} \mid X_t = s$, and our definition would be the ``expected drift'' in their terminology. Some authors would also call the conditional expectation $E[X_{t} - X_{t+1} \mid \mathcal F_t]$ ``drift'', which is itself a random variable (by the randomness in the history of the algorithm). Again, our notion of drift would be the expected drift $E_{\mathcal F_t}\big[E[X_{t} - X_{t+1} \mid \mathcal F_t]\big]$ in this terminology. Yet another notion uses ``drift'' to refer to the conditional random variable $X_{t} - X_{t+1} \mid \mathcal F_t$. Fortunately, the heterogeneous nomenclature usually does not lead to confusion, except some minor notational irritations.
\end{enumerate}

\section{Elementary Introduction to Drift Analysis}\label{driftsec:elementary}

We start with an elementary introduction to drift analysis. We will discuss the three main workhorses, The Additive Drift Theorem~\ref{driftthm:additive}, the Variable Drift Theorem~\ref{driftthm:variable}, and the Multiplicative Drift Theorem~\ref{driftthm:multiplicative}. All of them give upper bounds on the expected hitting time $E[T]$, the Additive Drift Theorem also matching lower bounds.\footnote{Note that the expectation of a random variable may not always give the full story. There are even cases where the value of $E[T]$ may be misleading. We will discuss such examples in Section~\ref{driftsec:tailbounds}, where we consider drift theorems that give tail bounds on $T$.} 
\subsection{Additive Drift}\label{driftsec:additive}


The simplest possible drift is additive drift, i.e., $X_{t+1}$ differs from $X_t$ in expectation by an additive constant. The theorem in its modern form dates back to He and Yao \cite{HeY01, HeYao:04:drift}, who built on work by Hajek~\cite{hajek1982hitting},\footnote{They were apparently all unaware that the result had been proven even earlier by Tweedie~\cite[Theorem 6]{tweedie1976criteria}, and a yet earlier proof in Russian has been attributed to Menshikov~\cite[Bibliographical Notes on Section 2.6]{menshikov2016non}. The Additive Drift Theorem has been proven and rediscovered many times, and it is known under various names. For example, in stability theory it is considered  a special case of Dynkin's formula~\cite[Theorem 11.3.1]{meyn2012markov}, or as a generalization of Foster's criterion~\cite[Proposition 4.5]{bramson2008stability}. In these contexts, drift analysis is often called \emph{Lyapunov Function Method}, e.g.~\cite[Theorem 2.6.2]{menshikov2016non}. However, the hitting time is often only a side aspect in these areas.} which they stripped of its substantial technical overhead due to the fact that Hajek's focus was more on deciding whether hitting times actually exist for unbounded state spaces. He and Yao proved their theorem using (without explicit reference) the \emph{Optional Stopping Theorem} for martingales~\cite{grimmett2001probability}.
Here we give an elementary proof taken from~\cite{lengler2016drift}, since this proof gives some insight in the differences between upper and lower bounds.

\begin{theorem}[Additive Drift Theorem~\cite{HeYao:04:drift}]\label{driftthm:additive}
Let $(X_t)_{t\geq 0}$ be a sequence of non-negative random variables with a finite state space $\mathcal{S} \subseteq \R_0^+$ such that $0 \in \mathcal S$. Let $T := \inf\{t \geq 0 \mid X_t =0\}$. 
\begin{enumerate}[(a)]
\item If there exists $\delta > 0$ such that for all $s\in \mathcal S \setminus\{0\}$ and for all $t\geq 0$,
\begin{align}\label{drifteq:additivecond}
\Delta_t(s):=E[X_t-X_{t+1}\mid X_t=s]\ge \delta,
\end{align}
then
\begin{align}\label{drifteq:additiveconc}
E[T] \leq \frac{E[X_0]}{\delta}.
\end{align}
\item If there exists $\delta > 0$ such that for all $s\in \mathcal S \setminus\{0\}$ and for all $t\geq 0$,
\begin{align}\label{drifteq:additivecondlower}
\Delta_t(s):=E[X_t-X_{t+1}\mid X_t=s]\le \delta,
\end{align}
then
\begin{align}\label{drifteq:additiveconclower}
E[T] \geq \frac{E[X_0]}{\delta}.
\end{align}
\end{enumerate}
\end{theorem}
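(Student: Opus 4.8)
The plan is to track the \emph{stopped process} $Y_t := X_{\min\{t,T\}}$, which coincides with $X_t$ before the hitting time and is frozen at $X_T = 0$ afterwards, and to turn a telescoping sum of its increments into a statement about $E[T]$ via the elementary identity $E[T] = \sum_{t=0}^{\infty}\Pr[T>t]$, valid for any $T$ taking values in $\{0,1,2,\dots\}\cup\{\infty\}$.

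The first step is the one-step estimate $E[Y_t - Y_{t+1}] \geq \delta\,\Pr[T>t]$ in case~(a), with the inequality reversed in case~(b). Conditioning on the history $\mathcal{F}_t$, the event $\{T>t\}$ is determined by $X_0,\dots,X_t$; on it we have $Y_t - Y_{t+1} = X_t - X_{t+1}$ with $X_t \neq 0$, so the drift hypothesis gives $E[X_t - X_{t+1}\mid \mathcal{F}_t]\geq \delta$ there, while on $\{T\leq t\}$ both terms vanish. Taking expectations yields the estimate. The only delicate point is that the hypothesis is phrased as $E[X_t - X_{t+1}\mid X_t = s]$, which supplies exactly this bound once $X_t$ is read as a function of the underlying Markovian state, as in Section~\ref{driftsec:variants}.

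Summing over $t=0,\dots,N$ and telescoping gives
\begin{align*}
E[X_0] - E[Y_{N+1}] = \sum_{t=0}^{N} E[Y_t - Y_{t+1}].
\end{align*}
For the upper bound~(a) I would simply drop the term $E[Y_{N+1}]\geq 0$, obtaining $\delta\sum_{t=0}^{N}\Pr[T>t]\leq E[X_0]$ for every $N$; letting $N\to\infty$ and inserting the identity for $E[T]$ yields $E[T]\leq E[X_0]/\delta$. This direction needs no control on $E[Y_{N+1}]$ at all, which is precisely why the upper bound survives even on unbounded state spaces.

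The lower bound~(b) is the delicate direction: telescoping now gives $E[X_0] - E[Y_{N+1}] \leq \delta\sum_{t=0}^{N}\Pr[T>t] \leq \delta\,E[T]$, so instead of discarding $E[Y_{N+1}]$ I must show it vanishes in the limit. If $\Pr[T=\infty]>0$ then $E[T]=\infty$ and the claim is trivial, so I may assume $T<\infty$ almost surely; then $Y_{N+1}\to 0$ pointwise, and since $\mathcal{S}$ is finite, hence bounded by some $M$, the bound $E[Y_{N+1}]\leq M\,\Pr[T>N+1]\to 0$ lets me pass to the limit and conclude $E[X_0]\leq \delta\,E[T]$. This convergence is the main obstacle and the one place where finiteness (boundedness) of $\mathcal{S}$ is indispensable: it is exactly the step that fails on unbounded spaces, where mass of $Y_{N+1}$ can escape to infinity, which explains the asymmetry between the two halves of the theorem.
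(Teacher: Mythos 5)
Your proposal is correct and follows essentially the same route as the paper's proof: the stopped process $Y_t=X_{\min\{t,T\}}$ plays the role of the paper's convention $X_{T+1}=X_{T+2}=\dots=0$, the one-step estimate $E[Y_t-Y_{t+1}]\gtrless\delta\Pr[T>t]$ is the paper's inequality~\eqref{drifteq:additive2}, and the telescoping plus $E[T]=\sum_{t\ge 0}\Pr[T>t]$ and the boundedness argument for $E[Y_{N+1}]\to 0$ in part~(b) match the paper exactly. One small caution: your derivation of the one-step estimate via $E[X_t-X_{t+1}\mid\mathcal F_t]\ge\delta$ on $\{T>t\}$ is not warranted by the hypothesis, which is an \emph{averaged} drift condition (conditioning only on $X_t=s$) and does not upgrade to a pointwise bound merely by viewing $X_t$ as a function of a Markovian state; the correct and equally short justification is to note that $\{T>t\}=\{X_t>0\}$ for the stopped process and apply the law of total probability over the values $s\in\mathcal S\setminus\{0\}$ of $X_t$, which is precisely what the paper does in~\eqref{drifteq:additive0}--\eqref{drifteq:additive2}.
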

\begin{proof}
\noindent \emph{(a)} As we are only interested in the hitting time $T$ of zero we may assume without loss of generality  that $X_{T+1} = X_{T+2} = \ldots = 0$. 

We may rewrite condition~\eqref{drifteq:additivecond} as $E[X_{t+1}\mid X_t=s]\le E[X_{t}\mid X_t=s]- \delta$. Since this holds for all $s\in \mathcal S \setminus\{0\}$, and since $T>t$ if and only if $X_t >0$, we conclude
\begin{equation}\label{drifteq:additive0}
E[X_{t+1}\mid T>t]\le E[X_{t}\mid T>t]- \delta.
\end{equation}
By the law of total probability we have 
\begin{align}\label{drifteq:additive1}
E[X_t] &= \Pr[T>t] \cdot E[X_t \mid T>t] + \Pr[T\le t] \cdot \underbrace{E[X_t \mid T\leq t]}_{=0} \nonumber\\
& = \Pr[T>t] \cdot E[X_t \mid T>t].
\end{align}
Proceeding similarly for $X_{t+1}$  we obtain 
\begin{align}\label{drifteq:additive2}
E[X_{t+1}]&\;\;=\;\;\Pr[T>t] \cdot E[X_{t+1} \mid\! T>t] + \Pr[T\le t]\cdot \underbrace{E[X_{t+1} \mid T\leq t]}_{=0}\nonumber\\
& \stackrel{(\ref{drifteq:additive0})}\le    \Pr[T>t] \cdot (E[X_{t} \mid T>t] -  \delta )\nonumber\\ 
& \stackrel{(\ref{drifteq:additive1})}=   E[X_t] - \delta\cdot \Pr[T >t].
\end{align}
Since $T$ is a random variable that takes values in $\N_0$, we may write $E[T] = \sum_{t=0}^{\infty}\Pr[T>t]$. Thus 
\begin{align}\label{drifteq:additive3}
\delta\cdot E[T] 
& \stackrel{\tau\to\infty}{\longleftarrow}  \sum_{t=0}^\tau \delta\Pr[T > t] 
\; \stackrel{(\ref{drifteq:additive2})}\le\;   \sum_{t=0}^\tau (E[X_{t}] - E[X_{t+1}])
\; = \; E[X_0] - \underbrace{E[X_{\tau+1}]}_{\geq 0} \nonumber \\
& \; \leq\;  E[X_0],
\end{align}
which proves (a). 

\noindent \emph{(b)} Analogously to (a), equations~\eqref{drifteq:additive0}, \eqref{drifteq:additive1}, \eqref{drifteq:additive2}, and \eqref{drifteq:additive3} hold with reversed inequalities, except for the very last step in \eqref{drifteq:additive3}. So \eqref{drifteq:additive3} becomes:
\begin{align}\label{drifteq:additive4}
\delta\cdot E[T]  & \stackrel{\tau\to\infty}{\longleftarrow}  \sum_{t=0}^\tau \delta\Pr[T > t]  \geq E[X_0] - E[X_{\tau+1}].
\end{align}
There are only two possible cases. Either $\Pr[T > t]$, which is a non-increasing sequence, does \emph{not} converge to $0$. In this case, $E[T] = \sum_{t=0}^{\infty}\Pr[T>t] = \infty$, in which case (b) holds trivially. Or $\Pr[T > t] \to 0$, and by~\eqref{drifteq:additive1} we also have 
\begin{align}\label{drifteq:additive5}
E[X_{\tau+1}] = \underbrace{\Pr[T>t]}_{\to \; 0} \cdot \underbrace{E[X_{\tau+1} \mid T>t]}_{\leq\;  \max S \; <\;  \infty} \to 0.
\end{align}
Now (b) follows from~\eqref{drifteq:additive4} and~\eqref{drifteq:additive5}.
\end{proof}

The proof also shows what can generally go wrong for infinite search spaces. The proof of (a) goes through unmodified. For (b), inequality~\eqref{drifteq:additive4} is generally true. Moreover, it is \emph{tight} if condition~\eqref{drifteq:additivecondlower} is tight. The problem is that $E[X_{\tau+1}]$ may not go to zero. For example,\label{ex:infinitesearchspace} consider the Markov chain where $X_{t+1}$ is either $0$ or $2X_t$, both with probability $1/2$. Here $E[T]=2$, but $E[X_t-X_{t+1}]=0$ for all $t\geq 0$. In particular, condition~\eqref{drifteq:additivecondlower} is satisfied with $\delta=1$ (or any other $\delta>0$), but the conclusion of (b) does not hold. On the other hand, for the tight choice $\delta=0$, we see that we have equality in~\eqref{drifteq:additive4} since $E[X_{\tau+1}] = E[X_0]$.

Note that if the drift in Theorem~\ref{driftthm:additive} is \emph{exactly} $\delta$ in each step, then the upper and lower bounds match. In this case, Theorem~\ref{driftthm:additive} can be seen as an \emph{invariance theorem}, which states that the expected hitting time of $0$ is \emph{independent} of the exact distribution of the progress, as long as the expectation of the progress (i.e., the drift) remains fixed. In particular, if $X_0$ is an integer multiple of $\delta$, this includes the deterministic case in which $X_t$ decreases in each step by exactly $\delta$, with probability $1$. Thus a process of constant drift can not be accelerated (or slowed down) by redistributing the probability mass. We will resume this point in Section~\ref{driftsec:variable} when we discuss why other drift theorem are \emph{not} tight in general.

We conclude the section on additive drift with an application.
\begin{example}[RLS on \lo]\label{driftexample:RLSLO}
Consider \emph{Random Local Search} (RLS) on the $n$-dimensional hypercube $\{0,1\}^n$. RLS is a (1+1)-algorithm (i.e., it has population size one and generates only one offspring in each generation). The mutation operator flips exactly one bit, which is chosen uniformly at random. RLS has elitist selection, i.e., the offspring replaces the parent if and only if its fitness is at least as large as the parent's fitness. A pseudocode description is given in Algorithm~\ref{driftalgo:RLS}.
\begin{algorithm2e}\label{driftalgo:RLS}
 Choose $x^{(0)} \in \{0,1\}^n$ uniformly at random\;
\Indp
 \Indm
 \For{$t=0,1,2,\ldots$}{
 Pick $i \in \{1,\ldots,n\}$ uniformly at random, and create $y^{(t)}$ by flipping the $i$-th bit in $x^{(t)}$\; 
 \eIf{$f(y^{(t)}) \geq f(x^{(t)})$}{$x^{(t+1)} \assign y^{(t)}$\;}{$x^{(t+1)} \assign x^{(t)}$\;}
	 }
 \caption{Random Local Search (RLS) maximizing a fitness function $f:\{0,1\}^n \rightarrow \R$.}
\end{algorithm2e}

We study RLS on the \lo fitness function, which returns the number of initial one-bits before the first zero bit. Formally, 
\[
\lo(x) = \sum_{k=1}^n \prod_{i=1}^k x_i = \max\{i\in \{1,\ldots,n\} \mid \underbrace{11\ldots 1}_{i\text{ times}} \text{ is a prefix of } x\}. 
\]
The \lo problem is a classical benchmark problem for evolutionary algorithms, and RLS on \lo has been studied in much greater detail than we can present here, with methods and results that go far beyond drift analysis~\cite{doerr2016impact, ladret2005asymptotic}.

\noindent {\bf Naive potential.} As potential we choose in a first step $X_t := n- f(x^{(t)})$, the distance in fitness from the optimum. The state space is $\mathcal S = \{0,\ldots,n\}$. We need to compute the drift $\Delta_t(s) := E[X_{t} - X_{t+1} \mid X_t = s]$ for every state $s \in \mathcal S\setminus \{0\}$, so we fix such an $s$. For convenience, we write $k := n-s \in \{0,\ldots,n-1\}$ for the fitness in this case. Note that $X_t = s$ implies that the first $k$ bits of $x^{(t)}$ are one-bits, but the $k+1$-st bit is a zero-bit. Obviously, the potential changes if and only if we flip the $k+1$-st bit, so let us denote this event by $\mathcal E$. Since the flipped bit is chosen uniformly, we have $\Pr[\mathcal E] = 1/n$. Hence the drift is
\begin{align}\label{drifteq:RLSLO1}
\Delta_t(s) & = \Pr[\mathcal E] \cdot \underbrace{E[X_{t} - X_{t+1} \mid X_t = s \text{ and } \mathcal E]}_{=:E(s)}  = \tfrac 1n \cdot E(s).
\end{align}
So it remains to bound the conditional expectation $E(s)$. Such conditional expectations occur quite frequently when a drift is computed. Assume that $X_t = s$ (i.e., $f(x^{(t)})= k = n-s$), and that $\mathcal E$ occurs. Obviously, $E(s) \geq 1$, since we improve at least the $k+1$-st bit. On the other hand, we improve the fitness by at least $2$ if and only if the $k+2$-nd bit happens to be a one-bit. Note that since the algorithm is elitist and has fitness $f(x_t)=k$, the $k+2$-nd bit has had no influence on the fitness of previous search points. Therefore, by symmetry, it has probability $1/2$ to be a one-bit\footnote{Note that such an argument would not be true if we would condition on one particular history of the algorithm, cf. the discussion on filtrations in Section~\ref{driftsec:situation}.} and we obtain 
\[
\Pr[X_t-X_{t+1} \geq 2 \mid X_t = s \text{ and } \mathcal E] = Pr[x^{(t)}_{k+2} = 1 \mid X_t = s \text{ and } \mathcal E] = 1/2.
\] 
Analogously, $X_t-X_{t+1} \geq i$ if and only if the bits with indices $k+2, \ldots, k+i$ are all one-bits, which happens with probability $2^{-i+1}$. Since $X_t-X_{t+1}$ is an integer non-negative random variable, we may sandwich
\begin{align}\label{drifteq:RLSLO2}
1\leq E(s) & = \sum_{i=1}^{s} \Pr[X_t-X_{t+1} \geq i \mid X_t = s \text{ and } \mathcal E] \nonumber \\
& = 1+\sum_{i=2}^{s} 2^{-i+1} < 1+\sum_{i=1}^{\infty} 2^{-i} =2
\end{align}
Hence, by~\eqref{drifteq:RLSLO1},
\begin{align}
\tfrac{1}{n} \leq \Delta_t(k)  \leq \tfrac{2}{n},
\end{align}
and Theorem~\ref{driftthm:additive} implies that
\begin{align}\label{drifteq:RLSLO3}
\frac{n}{2} E[X_0] \leq E[T] \leq n E[X_0].
\end{align}
To estimate $E[X_0] = n - E[f(x^{(0)})]$, we observe that $f(x^{(0)}) \geq i$ happens if only if the first $i$ bits are all one-bits, which happens with probability $2^{-i}$. Hence, a similar calculation as before shows 
\begin{align}\label{drifteq:RLSLO4}
E[f(x^{(0)})] = \sum_{i=1}^n \Pr[f(x^{(0)}) \geq i] = \sum_{i=1}^n 2^{-i} = 1-2^{-n} \in [0,1],
\end{align}
and thus $n-1 \leq E[X_0] \leq n$. Therefore, by \eqref{drifteq:RLSLO3} we get $\tfrac{(n-1)n}{2}\leq E[T]\leq n^2$, and thus $E[T] = \Theta(n^2)$.

\noindent {\bf Translated potential.} The analysis so far gives the asymptotics $E[T]$, but it is not tight up to constant factors. The problem is, as \eqref{drifteq:RLSLO2} shows, that the inequality $E(k) \geq 1$ is rather coarse except for the few exceptional cases where $k$ is almost $n$. In fact, in the border case $k=n-1$ we have equality, $E(k)=1$. Hence, we do not have a perfectly constant drift, which is a reason for the discrepancy between upper and lower bound. Such border effects can often be remedied by \emph{translating the potential function}. In this case, we consider
\begin{align}\label{drifteq:RLSLOY}
Y_t & := \begin{cases} X_t +1 , & \text{if $X_t \geq 1;$ }\\  0, & \text{otherwise.}\end{cases}
\end{align}
The effect is that the drift increases when there is a substantial chance to reach $0$ in the next step. In our case, we get an additional term for $i=n-k+1$ in~\eqref{drifteq:RLSLO2}, which equals the term for $i=n-k$. Intuitively, the term for $i=n-k$ counts double since in this case the potential drops from $2$ to $0$, rather than from $1$ to $0$. Consequently, we get for the potential $Y_t = s+1$, which corresponds as before to fitness $f(x^{(t)})=k=n-s$:
\begin{align}\label{drifteq:RLSLO5}
E[Y_{t} - Y_{t+1} \mid Y_t = s+1 \text{ and } \mathcal E] & = \sum_{i=1}^{n-k} \Pr[Y_t-Y_{t+1} \geq i \mid Y_t = s+1 \text{ and } \mathcal E] \nonumber \\
& = 1+\sum_{i=2}^{n-k} 2^{-i+1} + 2^{n-k+1} =2.
\end{align}
Hence, the drift with respect to $Y_t$ is exactly $2/n$, and Theorem~\ref{driftthm:additive} gives a tight result:
\begin{align}\label{drifteq:RLSLO6}
E[T] = \frac{n}{2} E[Y_0].
\end{align}
From~\eqref{drifteq:RLSLOY} it is easy to compute $E[Y_0]$ exactly as
\begin{align*}
E[Y_0] = n- E[f(x^{(0)})]+1\cdot \Pr[Y_0 >0] \stackrel{\eqref{drifteq:RLSLO4}}{=} n- (1-2^{-n}) + 1-2^{-n} = n.
\end{align*}
Together with~\eqref{drifteq:RLSLO6}, the Additive Drift Theorem~\ref{driftthm:additive} now implies $E[T] = n^2/2$.
\end{example}

The previous example illustrates how important it is for Theorem~\ref{driftthm:additive} that the drift be as uniform as possible, to get matching upper and lower bounds. The example also shows that rescaling of the potential function may be a way to smoothen out inhomogeneities. Following this approach systematically leads to the variable drift theorem that we will discuss in the next section.

\subsection{Variable Drift}\label{driftsec:variable}

The Additive Drift Theorem is useful because it is tight, but it requires us to find a potential function that has constant drift. Is this even always possible? The perhaps surprising answer is ``Yes'', as we will discuss in Section~\ref{driftsec:potential}. Unfortunately, it can be rather hard to find a good potential. However, there are helpful tools. Even if we start with a potential functions in the ``wrong'' scaling, Mitavskiy, Rowe, and Cannings~\cite{Mit-Row-Can:j:09}, and Johannsen in his PhD thesis~\cite{Joh:th:10} developed a theorem which \emph{automatically} rescales the drift in the right way. A similar result has been obtained independently (and earlier) by Baritompa and Steel~\cite{baritompa1996bounds}
\begin{theorem}[{Variable Drift Theorem \cite{Joh:th:10,DBLP:conf/gecco/RoweS12}}]
\label{driftthm:variable}
Let $(X_t)_{t\geq 0}$ be a sequence of non-negative random variables with a finite state space $\mathcal{S} \subseteq \R_0^+$ such that $0 \in \mathcal S$. Let $s_{\text{min}} := \min(\mathcal{S} \setminus \{0\})$, let $T := \inf\{t \geq 0 \mid X_t =0\}$, and for $t \geq 0$ and $s \in \mathcal S$ let $\Delta_t(s) := E[X_{t} - X_{t+1} \mid X_t = s]$. If there is an increasing function\footnote{Some formulations in the literature require $h$ to be integrable. However, since we assume $\mathcal{S}$ to be finite, the interval $[s_{\min},X_0]$ is a compact interval, on which every monotone function is integrable.} $h: \R^+ \rightarrow \R^+$ such that for all $s \in \mathcal{S}\setminus\{0\}$ and all $t\geq 0$,
\begin{align}\label{drifteq:variabledrift1}
\Delta_t(s) \geq h(s),
\end{align}
then
\begin{align}\label{drifteq:variabledrift2}
E[T] \leq \frac{s_{\text{min}}}{h(s_{\text{min}})} + E\left[\int_{s_{\text{min}}}^{X_0} \frac{1}{h(\sigma)} d\sigma\right],
\end{align}
where the expectation on the latter term is over the random choice of $X_0$.
\end{theorem}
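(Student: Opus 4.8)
The plan is to reduce the statement to the Additive Drift Theorem~\ref{driftthm:additive} by rescaling the potential, using $h$ to tell us exactly how to stretch the state space so that the drift becomes (at least) the constant $1$. Concretely, I would introduce the rescaled potential $g : \mathcal{S} \to \R_0^+$ defined by $g(0) := 0$ and, for $s \geq s_{\text{min}}$,
\begin{align*}
g(s) := \frac{s_{\text{min}}}{h(s_{\text{min}})} + \int_{s_{\text{min}}}^{s} \frac{1}{h(\sigma)}\, d\sigma,
\end{align*}
and set $Y_t := g(X_t)$. Since $1/h > 0$, the function $g$ is strictly increasing, so $g(s) = 0$ if and only if $s = 0$; hence the hitting time of $0$ for $(Y_t)_{t \geq 0}$ coincides with $T$. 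Moreover $g$ is non-negative with finite image, and by construction $E[Y_0] = E[g(X_0)]$ is exactly the right-hand side of~\eqref{drifteq:variabledrift2}. Thus it suffices to prove that $(Y_t)_{t \geq 0}$ has additive drift at least $1$ and then invoke Theorem~\ref{driftthm:additive}(a) with $\delta = 1$.

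The heart of the argument is a deterministic, pointwise inequality comparing $g$ at two states. I claim that for every $s \in \mathcal{S} \setminus \{0\}$ and every $s' \in \mathcal{S}$,
\begin{align*}
g(s) - g(s') \geq \frac{s - s'}{h(s)}.
\end{align*}
The proof of this claim splits according to the position of $s'$ relative to $s$. If $s' \geq s_{\text{min}}$, then $g(s) - g(s') = \int_{s'}^{s} \frac{1}{h(\sigma)}\, d\sigma$ (a signed integral), and since $h$ is increasing we have $1/h(\sigma) \geq 1/h(s)$ for $\sigma \leq s$ and $1/h(\sigma) \leq 1/h(s)$ for $\sigma \geq s$; in both the case $s' \leq s$ and the case $s' > s$ this monotonicity yields $\int_{s'}^{s} \frac{1}{h(\sigma)}\, d\sigma \geq (s - s')/h(s)$. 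The remaining boundary case is $s' = 0$, where the constant term in the definition of $g$ must be accounted for: here one estimates $\frac{s_{\text{min}}}{h(s_{\text{min}})} \geq \frac{s_{\text{min}}}{h(s)}$ (again by monotonicity of $h$) together with $\int_{s_{\text{min}}}^{s} \frac{1}{h(\sigma)}\, d\sigma \geq (s - s_{\text{min}})/h(s)$, and adding the two gives $g(s) \geq s/h(s)$, which is the claim for $s' = 0$.

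With the pointwise inequality in hand, the rest is immediate. Fixing $s \in \mathcal{S}\setminus\{0\}$ and taking the conditional expectation over $X_{t+1}$ given $X_t = s$ (note $X_{t+1} \in \mathcal{S}$, so the claim applies with $s' = X_{t+1}$), linearity yields
\begin{align*}
E[Y_t - Y_{t+1} \mid X_t = s] = g(s) - E[g(X_{t+1}) \mid X_t = s] \geq \frac{E[X_t - X_{t+1} \mid X_t = s]}{h(s)} = \frac{\Delta_t(s)}{h(s)} \geq 1,
\end{align*}
where the last step uses the hypothesis $\Delta_t(s) \geq h(s)$ and $h(s) > 0$. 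Since $g$ is strictly increasing, conditioning on $Y_t = g(s)$ is the same as conditioning on $X_t = s$, so $(Y_t)_{t \geq 0}$ satisfies the hypothesis of Theorem~\ref{driftthm:additive}(a) with $\delta = 1$, and we conclude $E[T] \leq E[Y_0] = E[g(X_0)]$, which is~\eqref{drifteq:variabledrift2}. I expect the main obstacle to be the careful verification of the pointwise inequality, in particular handling the boundary term at $s' = 0$ and checking that it also holds when the potential increases ($s' > s$), since these are exactly the cases where a naive concavity/tangent argument needs to be made rigorous.
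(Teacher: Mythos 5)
Your proposal is correct and follows essentially the same route as the paper's proof: rescale via $g(s) = \frac{s_{\text{min}}}{h(s_{\text{min}})} + \int_{s_{\text{min}}}^{s} \frac{1}{h(\sigma)}\,d\sigma$, establish the pointwise inequality $g(s)-g(s') \geq (s-s')/h(s)$ by the same case distinction (your signed-integral treatment merely merges the paper's two cases with $s' \geq s_{\text{min}}$), and reduce to the Additive Drift Theorem with $\delta = 1$. No gaps.
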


We remark that the condition that $h$ be increasing is usually satisfied, since progress typically becomes harder as the algorithm approaches an optimum. We will see in the proof why the condition is necessary, and an example showing that it is necessary can be found in~\cite{kotzing2018first}. However, variants of the theorem for non-decreasing drift function do exist~\cite{doerr2012ants,feldmann2013optimizing}.

We present a proof of the Variable Drift Theorem, for two reasons. Firstly, the theorem is so central that it deserves to come with a proof. Secondly, we will gain valuable insights from the proof. In particular, it will enable us to understand when the upper bound on $E[T]$ is tight, and realise when the upper bound may be misleading. A reader who is completely new to drift analysis may first skip ahead to some examples, and return to the proof when we discuss tightness of the Variable Drift Theorem.

\begin{proof}[Proof of Theorem~\ref{driftthm:variable}, adapted from~\cite{Joh:th:10}]
The main insight of the proof lies in an appropriate rescaling of $X_t$ by the function
\begin{align}\label{drifteq:proofvariable0}
g(s) := 
\begin{cases}\frac{s_{\text{min}}}{h(s_{\text{min}})} + \int_{s_{\text{min}}}^{s}\frac{1}{h(\sigma)}d\sigma, & s \geq s_{\text{min}},\\
\frac{s}{h(s_{\text{min}})}, & 0 \leq s \leq s_{\text{min}}.
\end{cases}
\end{align}
The integral is well-defined since $h$ is increasing. Note that $g$ is strictly increasing. We claim that for all $s \in \mathcal{S}\setminus \{0\}$ and all $r \geq 0$,
\begin{align}\label{drifteq:proofvariable1}
g(s)-g(r) \geq \frac{s-r}{h(s)}.
\end{align}
To prove the claim, we distinguish three cases. First assume $s \geq r \geq s_{\text{min}}$. Then
\begin{align}\label{drifteq:proofvariable2}
g(s)-g(r) = \int_{r}^s \frac{1}{h(\sigma)} d\sigma \geq \int_{r}^s \frac{1}{h(s)} d\sigma = \frac{s-r}{h(s)}.
\end{align}
Similarly, if $r \geq s \geq s_{\text{min}}$, then
\begin{align}\label{drifteq:proofvariable3}
g(r)-g(s) = \int_{s}^r \frac{1}{h(\sigma)} d\sigma \leq \int_{s}^r \frac{1}{h(s)} d\sigma = \frac{r-s}{h(s)},
\end{align}
and multiplication with $-1$ yields the claim. The only remaining case is $s \geq s_{\text{min}} > r\geq 0$ (since we assumed $s\in  \mathcal{S}\setminus \{0\}$), and in this case,
\begin{align}\label{drifteq:proofvariable4}
g(s)-g(r) & = \frac{s_{\text{min}}}{h(s_{\text{min}})} + \int_{s_{\text{min}}}^s \frac{1}{h(\sigma)} d\sigma - \frac{r}{h(s_{\text{min}})} \geq  \frac{s_{\text{min}}-r}{h(s_{\text{min}})} +\frac{s-s_{\text{min}}}{h(s)} \nonumber \\ 
& \geq \frac{s-r}{h(s)}.
\end{align}
Now let us consider the rescaled random variable $Y_t := g(X_t)$. This random variable takes values of the form $g(s)$, where $s\in S$. For all $s \in \mathcal{S} \setminus \{0\}$,
\begin{align}\label{drifteq:proofvariable5}
E[Y_t-Y_{t+1}\mid Y_t = g(s)] & = E[g(X_t)-g(X_{t+1})\mid g(X_t) = g(s)] \nonumber \\
&\stackrel{\eqref{drifteq:proofvariable1}}{\geq} E\left[\frac{X_t-X_{t+1}}{h(X_t)}\; \middle|\; X_t = s\right] = \frac{\Delta_t(s)}{h(s)} \stackrel{\eqref{drifteq:variabledrift1}}{\geq} 1.
\end{align}
Hence $Y_t$ has at least a constant drift. The theorem follows by applying the Additive Drift Theorem~\ref{driftthm:additive} to $Y_t$.
\end{proof}

\begin{example}[Coupon Collector, RLS on \om]\label{driftexample:RLSOM}
The most classical example for variable drift is the \emph{Coupon Collector Process (CCP)}: there are $n$ types of coupons, and a collector wants to have at least one coupon of each type. However, the coupons are sold in opaque wrappings, so she cannot see the type of a coupon before buying it. If each type occurs with the same frequency $1/n$, how many coupons does she need to buy before she has every type at least once?

The CCP and its variants appear in various contexts within the study of EAs. The most basic example is the runtime of RLS (Algorithm \ref{driftalgo:RLS} on page \pageref{driftalgo:RLS}) for maximising the \om fitness function, which counts the number of one-bits in a bitstring. Formally, for $x \in \{0,1\}^n$,
\begin{align}\label{drifteq:defOneMax}
\om(x) = \sum_{i=1}^n x_i. 
\end{align}
The one-bits correspond to the coupons in the CCP that the collector has already obtained. Since RLS flips in each round exactly one bit, and a one-bit stays a one-bit forever, a round of RLS corresponds exactly to the purchase of a coupon. Thus the number of rounds of RLS on \om is equivalent to the number of purchases in the CCP.\footnote{except for the initial conditions: for the CCP, the collector usually starts with no coupons, while RLS starts with a random bitstring and thus with a random initial number of ones/coupons.}

To analyse the CCP, we let $X_t$ be the number of missing coupons after $t$ purchases, and as usual we denote by $T$ the hitting time of $0$. Then for $X_t=s$ the probability to obtain a new type with the next purchase is $s/n$. In this case $X_t$ decreases by one, so $X_t$ has a drift of $\Delta_t(s) = s/n$. The minimal positive value of $X_t$ is $s_{\text{min}} = 1$. Hence, the Variable Drift Theorem with function $h(s)=s/n$ gives the upper bound
\begin{align}\label{drifteq:exampleCCP}
E[T] \leq \frac{1}{h(1)} + E\left[\int_{1}^{X_0} \frac{n}{\sigma} d\sigma\right] = n(1+E[\ln(X_0)]) \leq n\ln n + n.
\end{align}
\end{example}\bigskip

The drift in Example~\ref{driftexample:RLSOM} was \emph{multiplicative}, i.e., $\Delta_t(s)$ was proportional to $s$. This is by far the most important special case of the Variable Drift Theorem, important enough that in Section~\ref{driftsec:multiplicative} we will provide it with a theorem of its own, the Multiplicative Drift Theorem. A reader who is eager to see some more cute examples of a similar type is invited to peek ahead.

The upper bound in Example~\ref{driftexample:RLSOM} is remarkably tight. The expected runtime is indeed $E[T] = n \ln n + \Theta(n)$, both for CCP~\cite{MotwaniR97} and for RLS on \om~\cite{doerr2016impact}. We will discuss in the next section when we can expect the bounds from the Variable Drift Theorem to be tight, and see situations in which they are rather inaccurate. 
Before that, we give a more serious example coming from applications. 

\begin{example}[Genetic Programming]\label{driftexample:GP}
Genetic Programming (GP) uses evolutionary principles to automatically generate programs which match some desired input-output schemes. The programs are typically represented as syntax trees~\cite{koza1992genetic}, where the leaves correspond to variables $x_1,\ldots,x_n$, and the inner nodes correspond to operators like \AND, \OR, or \NOT. Here we restrict ourselves to the Boolean domain, for simplicity. Then each syntax tree $\tau$ represents a Boolean term, and thus defines a pseudo-Boolean function $f_\tau : \{0,1\}^n \to \{0,1\}$. Doerr, Lissovoi and Oliveto~\cite{doerr2018evolution} studied the problem of learning the \AND function $\AND(x_1,\ldots,x_n) = x_1 \wedge \ldots \wedge x_n$, if the inner nodes may either be \AND or \OR. To turn it into an optimisation problem, we assign to each syntax tree $\tau$ the number $F(\tau)$ of inputs $x = (x_1,\ldots,x_n) \in \{0,1\}^n$ for which $f_\tau(x_1,\ldots,x_n) \neq \AND(x_1,\ldots,X_n)$. So the goal is to reduce the potential $F$ to zero. The search procedures considered in~\cite{doerr2018evolution} uses a mutation operator which adds, substitutes, or deletes nodes, or which deletes whole subtrees of the current syntax tree. The actual algorithm is rather complicated, and we refer the reader to~\cite{doerr2018evolution} for more details.

We call $X_t := F(\tau_t)$, where $\tau_t$ is the syntax tree after $t$ steps. Then the authors could show that $X_t$ has the following drift. 
\begin{align}\label{drifteq:exampleGP1}
E[X_{t} - X_{t+1} \mid X_t = s] \geq h(s) := \begin{cases}\frac{\delta s\ln s}{\ln n}, & \text{ if $s \geq n$,} \\ \delta s, & \text{ if $s < n$,} \end{cases}
\end{align}
where $\delta = \Theta(1/n^2)$ depends on the number of variables, but is independent of $s$. Note that $h$ is increasing and that $X_0 \leq 2^n$. Therefore, the Variable Drift Theorem immediately gives the following upper bound on the expected optimisation time $T$.
\begin{align}\label{drifteq:exampleGP2}
E[T] \leq \frac{1}{h(1)} + \int_{1}^{2^n} \frac{1}{h(\sigma)} d\sigma = \frac{1}{h(1)} + \int_{1}^{n} \frac{1}{\delta \sigma} d\sigma + \int_{n}^{2^n} \frac{\ln n}{\delta \sigma \ln \sigma} d\sigma.
\end{align}
To compute the integral, we note that the inverse derivative of $1/\sigma$ is $\ln \sigma$, and the inverse derivative of $1/(\sigma \ln \sigma)$ is $\ln \ln \sigma$. Hence,
\begin{align}\label{drifteq:exampleGP3}
E[T] \leq  \frac{1}{\delta}+ \frac{\ln n}{\delta} + \frac{\ln n}{\delta}(\ln \ln 2^n - \ln \ln n) = O(\log^2 n/\delta) = O(n^2\log^2 n).
\end{align}
So once we have found the drift as in~\eqref{drifteq:exampleGP1}, the drift theorems make it an easy task to compute the expected runtime. Of course, the main contribution of the authors is to actually compute the drift.
\end{example}

\paragraph{Tightness of the Variable Drift Theorem.}\label{driftsec:tightness}

In general, the bound in the Variable Drift Theorem~\ref{driftthm:variable} does not need to be tight, even if we assume that $h(s)$ is a tight lower bound for the drift (i.e., if~\eqref{drifteq:variabledrift1} is an equality). However, in many situations the bound \emph{is} tight, especially if the potential $X_t$ does not jump around too much. Let us unravel the proof of Theorem~\ref{driftthm:variable} to understand this phenomenom better. 

We first note that the proof is a reduction to the Additive Drift Theorem, which \emph{is} tight (cf.~the discussion after Theorem~\ref{driftthm:additive}). So the only possible problem is the estimate~\eqref{drifteq:proofvariable5} on the drift. This estimate may not be tight if~\eqref{drifteq:proofvariable1}, the inequality $g(s)-g(r) \geq \frac{s-r}{h(s)}$, is too coarse. Note that for estimating the drift, we use~\eqref{drifteq:proofvariable1} specifically for $s= X_t$ and $r = X_{t+1}$. These are not arbitrary values; for example, for RLS on \om, they differ by at most one. We have proved~\eqref{drifteq:proofvariable1} by case distinction, so let us inspect one of the cases for illustration. For convenience, we restate the argument for $s>r> s_{\text{min}}$:
\begin{align}\tag{\ref{drifteq:proofvariable2}}
g(s)-g(r) = \int_{r}^s \frac{1}{h(\sigma)} d\sigma \geq \int_{r}^s \frac{1}{h(s)} d\sigma = \frac{s-r}{h(s)}.
\end{align}
The crucial step is too use $1/h(\sigma) \geq 1/h(s)$ for the range $r\leq \sigma \leq s$. In general, this may be a bad estimate. However, if $s= X_t$ and $r = X_{t+1}$ are close to each other then $\sigma$ runs through a small range, and $1/h(\sigma)$ may not vary too much. For example, $s$ and $r$ differ at most by one for RLS on \om, and the function $1/h(\sigma) = n/\sigma$ does not vary much in such a small range, especially if $r$ and $s$ are large. We will see in Section~\ref{driftsec:lower} that large jumps are still tolerable if they occur with sufficiently small probability. The following artificial example from~\cite{giessen2017optimal} illustrates how large jumps can lead to bad upper bounds. The idea of the construction is similar to the initial example from page~\pageref{ex:infinitesearchspace}.

\begin{example}[RLS with shortcuts]\label{driftexample:RLSshortcuts}
Consider a $(1+1)$-algorithm that in each step creates the optimum with probability $1/n$, and with probability $1-1/n$ it does an RLS step as in Algorithm~\ref{driftalgo:RLS}. For minimising \om, we may naively try the fitness as potential, $X_t := \om(x^{(t)})$. For $X_t = s >0$, there is a probability of $1/n$ to jump directly to the optimum, thus decreasing the potential by $s$. On the other hand, there is a probability of $(1-1/n) \cdot i/n$ to decrease the potential by $1$ with a normal RLS step. Together, the drift is
\begin{align}\label{drifteq:RLSshortcuts1}
\Delta_t(s) = h(s) := \frac{1}{n}\cdot s + \left(1-\frac{1}{n}\right)\frac{s}{n} = \frac{2s}{n} - \frac{s}{n^2} = (1\pm o(1))\frac{2s}{n}.
\end{align}
Thus, the Variable Drift Theorem~\ref{driftthm:variable} yields
\begin{align}\label{drifteq:RLSshortcuts2}
E[T] \leq \frac{1}{h(1)} + E\left[\int_{1}^{X_0} (1\pm o(1))\frac{n}{2\sigma}d\sigma\right] = \Theta(n\log n).
\end{align}
However, since in each step we have probability at least $1/n$ to jump directly to the optimum, the expected runtime is at most $E[T] \leq n$, so~\eqref{drifteq:RLSshortcuts2} is not tight. The problem can be understood by inspecting the transformed variable $Y_t := g(X_t)$ from the proof of the Variable Drift Theorem, equation~\eqref{drifteq:proofvariable0}. For simplicity we ignoring the factor $(1+o(1))$ in~\eqref{drifteq:RLSshortcuts1}, and obtain
\begin{align}\label{drifteq:RLSshortcuts3}
Y_t := \begin{cases} \frac{n}{2}(1+\ln X_t) & \text{ if } X_t \geq 1,\\
0 & \text{ if } X_t =0 \end{cases}.
\end{align}
Computing the drift of $Y_t$ directly, we obtain for $X_t = s$, i.e, for $Y_t = \frac{n}{2}(1+\ln s)$.
\begin{align}\label{drifteq:RLSshortcuts4}
E[Y_t-Y_{t+1} \mid X_t = s] &=  \frac{1}{n}\cdot\frac{n}{2}(1+\ln s) + \left(1-\frac{1}{n}\right)\frac{s}{n}\cdot\frac{n}{2}\left(\ln s -\ln(s-1)\right) \nonumber\\
& = \frac{\ln s}{2} \pm O(1).
\end{align}
Thus, we do not have constant drift in the scaled potential. However, in the proof of the Variable Drift Theorem~\eqref{driftthm:variable}, we bound the drift by 1 (see~\eqref{drifteq:proofvariable5}), which is the reason for the additional $\log n$ factor. 
\end{example}

Fortunately, it is quite common that there are no large jumps of fitness values. Mutation-based evolutionary algorithms tend to make small steps, and other nature-based search heuristics like ant-colony optimisation or estimation of distribution algorithms tend to make rather small updates on reasonable functions. However, note that this is not necessarily true for crossover operations. Also, depending on the fitness function a small (genotypical) change may cause a large (phenotypical) jump in the fitness, as the next example shows.

\begin{example}[RLS on \BinVal]\label{driftexample:RLSBinVal}

We consider RLS (Algorithm \ref{driftalgo:RLS} on page \pageref{driftalgo:RLS}) for minimising the \BinVal function given by
\begin{align}\label{drifteq:defBinVal}
\BinVal(x) = \sum_{i=1}^n 2^{n-i} x_i. 
\end{align}
If we choose the potential $X_t := \BinVal(x^{(t)})$ identical to the fitness, then we observe that each one-bit has probability $1/n$ to be flipped. If the $i$-th bit is flipped from one to zero, this reduces the potential by $2^i$. Hence, at search point $x$ with potential $s:= \BinVal(x)$ the drift is
\begin{align}\label{drifteq:binval1}
E[X_t-X_{t+1} \mid x^{(t)} = x] = \sum_{1 \leq i \leq n,\ x_i^{(t)} = 1} \frac{1}{n}\cdot 2^{n-i} = \frac{1}{n} \sum_{i=1}^n 2^{n-i} x_i = \frac{s}{n}. 
\end{align}
In particular, since the latter term only depends on $s$, we can write
\begin{align}\label{drifteq:binval2}
E[X_t-X_{t+1} \mid X_t = s] = \frac{s}{n}. 
\end{align}
Therefore we are in the situation to apply the Variable Drift Theorem~\ref{driftthm:variable} with $h(s) = s/n$ and $s_{\text{min}} = 1$, and obtain
\begin{align}\label{drifteq:binval3}
E[T] \leq \frac{1}{1/n} + E\left[\int_{1}^{X_0} \frac{n}{\sigma} d\sigma\right] = n + n\cdot E[\ln X_0] = \Theta(n^2),
\end{align}
where the last equality follows since $X_0 \leq 2^{n+1}$, and since with probability at least $1/2$ the first bit in $X_0$ is a one-bit, which implies $E[X_0]\geq 2^{n-1}$. 

However, the bound~\eqref{drifteq:binval3} is far from tight. In fact, if we use the \emph{OneMax potential} $\om(x):= \sum_{i=1}^n x_i$, then the drift with respect to $\om$ is still $\Delta_t^{\om}(s) = s/n$, which leads to a runtime bound of $E[T] \leq n+n\cdot E[\om(x^{(0)})] \leq n\ln n +n$.\footnote{Alternatively, we could observe that RLS behaves exactly the same on \BinVal and on \om, so the runtimes are the same.} 

The reason why~\eqref{drifteq:binval3} is not tight is that there may be some very large jumps in the potential (cf. the discussion before this example). For example, consider  the situation when only a single one-bit is left. RLS operates symmetrically on \BinVal, so this one-bit is at a random position.\footnote{Note that this is specific to RLS, which uses only one-bit flips. An algorithm which flips two or more bits per step would not operate symmetrically since it would trade a one-bit of large weight for a zero-bit of low weight, but not vice versa.} In particular, with probability at least $1/2$, the bit is in the first half, and thus $X_{t} \geq 2^{n/2}$. Therefore, in equation~\eqref{drifteq:proofvariable4} we estimate $h(\sigma) \leq h(s)$ for $\sigma$ which ranges at least between $s_{\text{min}}=1$ and $2^{n/2}$. Thus the estimate is off by an exponential factor. Consequently, the rescaled potential $Y_t = g(X_t) = n(1+ \ln X_t)$ does not have constant drift. While the drift is always at least $1$ by equation~\eqref{drifteq:proofvariable5}, if there is only a single one-bit left in the first half of the string, the rescaled potential decreases with probability $1/n$ from $Y_t \geq n(1+\ln 2^{n/2})= \Omega(n^2)$ to $0$. Hence, the drift of $Y_t$ in this situation is $1/n \cdot \Omega(n^2) = \Omega(n)$, causing the runtime bound to be almost a factor $n$ too large.
\end{example}

\paragraph{When Rescaling Beats the Variable Drift Theorem}

We have seen an example which illustrates why the Variable Drift Theorem does not always give tight results. Unfortunately, a common reason is that the potential does not represent very well the progress the algorithm has made, in which case a truly new insight is needed. However, sometimes the problem can be solved by directly considering the rescaled potential. We illustrate this by an artificial example taken from~\cite{lengler2016drift}. 

\begin{example}[Random Decline]\label{driftexample:randomdecline}
Let $a>0$ be a constant, let $n\in \N^+$, and consider the following Markov chain on $\mathcal{S}=\{0,\ldots,N\}$, where $N$ is a sufficiently large integer compared to $n$. For this exposition we will assume that $N$ is so large that the process never hits the right border. We start with $X_0 = n$, and for each $t\geq 0$ we draw $X_{t+1}$ uniformly at random from $\{0,1,2,\ldots,\min\{\lfloor aX_t\rfloor,N\}\}$. 

If $a<2$, then for $\mathcal{S} \in S \setminus \{0\}$ and all $t\geq 0$ we have a drift of
\begin{align}\label{drifteq:randomdecline1}
\Delta_t(s) \geq s-\frac{a}{2}s = \frac{2-a}{2}\cdot s.
\end{align}
Therefore, by the Variable Drift Theorem~\ref{driftthm:variable}, $E[T] = O(\log n)$. However, the theorem does not make any statement for $a \geq 2$.\footnote{Worse: the statement could be applied for non-constant $a$ like $a = 2(1-1/n)$, and would lead to the misleading bound $E[T] = O(n \log n)$.} However, let us inspect the rescaled potential $Y_t := 1+\ln(X_t)$. We only give an estimate, the full calculation including error terms can be found in~\cite{lengler2016drift}. For every $s \in \mathcal{S}\setminus\{0\}$ that is smaller than $N/a$:
\begin{align}\label{drifteq:randomdecline2}
E[Y_t-Y_{t+1}\mid Y_t = 1+\ln s] & =  1+\ln (s) - \frac{1}{\lfloor as +1\rfloor}\sum_{k=1}^{\lfloor as\rfloor}(1+\ln k) \nonumber\\
& \approx \ln(s) - \frac{1}{as}\left( \int_1^{as}\ln \sigma\; d\sigma \right)   \nonumber\\
& =  \ln(x) - \frac{1}{as}[\sigma\ln(\sigma) - \sigma]_{\sigma=1}^{as}  \nonumber\\
& \approx  \ln(s) - (\ln(as) - 1) =  1- \ln a.
\end{align}
Thus we see that if $a< e = 2.71\ldots$ is a constant, then the drift of $Y_t$ is also constant. Hence, by the Additive Drift Theorem~\ref{driftthm:additive} we get $E[T] = O(E[Y_0]) = O(\ln n)$. So the analysis of the rescaled random variable applies to a wider range than the Variable Drift Theorem~\ref{driftthm:variable}. In fact, the condition $a<e$ is tight for logarithmic runtime, since for $a \geq e$ the expected runtime is $\omega(\ln n)$~\cite{lengler2016drift}.

We have seen that once we try out the rescaling $Y_t = 1+\ln(X_t)$, the rest is very simple and mostly calculations. We will discuss in Section~\ref{driftsec:potential} how to see that this particular rescaling is worth trying.
\end{example}

\paragraph{Further applications of the Variable Drift Theorem}

We conclude the section with some more applications of the Variable Drift Theorem. They illustrate that even if the drift is a highly complicated function, the variable drift theorem gives us an explicit expression for the expected runtime, which we can evaluate by elementary calculus. An impatient reader is free to skip this section.

\begin{example}[\oplea on \om]\label{driftexample:1lambdaOM}
In 2017, Gie\ss en and Witt~\cite{giessen2017interplay} analysed the \oplea (Algorithm~\ref{driftalgo:oplea}) for minimising the \om function, cf. equation~\eqref{drifteq:defOneMax}.
\begin{algorithm2e}\label{driftalgo:oplea}
 Choose $x^{(0)} \in \{0,1\}^n$ uniformly at random\;
\Indp
 \Indm
 \For{$t=0,1,2,\ldots$}{
 \For{$i=1,\ldots,\lambda$}{
 Create $y^{(t,i)}$ by flipping each bit of $x^{(t)}$ independently with probability $c/n$\; 
 $y^{(t)} \assign \text{argmin}\{f(y^{(t,i)})\}$ (breaking ties randomly)\; 
 \eIf{$f(y^{(t)}) \leq f(x^{(t)})$}{$x^{(t+1)} \assign y^{(t)}$\;}{$x^{(t+1)} \assign x^{(t)}$\;}
	 }
	 }
 \caption{The \oplea with offspring population size $\lambda$ and mutation rate $c/n$, minimising a fitness function $f:\{0,1\}^n \rightarrow \R$.}
\end{algorithm2e}

The potential was identical with the fitness, $X_t = \om(x^{(t)})$. To bound the drift $\Delta_t(s)$, the authors used order statistics of the binomial distribution. They could show that $\Delta_t(s) \geq h(s)$, where\footnote{for the case $\lambda = \omega(1)$. The other case $\lambda = O(1)$ is similar.}
\begin{align}\label{drifteq:example1lambdaOM1}
h(s) := \begin{cases}
(1-o(1))\tfrac{\ln \lambda}{\ln{\ln \lambda}} & \text{ if } s \geq \frac{n}{(\ln \lambda)^{1/(\ln \ln \ln \lambda)}},\\
(1/2-o(1))e^{-c}\frac{\ln \lambda}{\ln \ln \lambda} & \text{ if } s \geq \frac{n}{\ln \lambda},\\
(1-o(1))e^{-c}\min\{c,1\}/2 & \text{ if } s \geq \frac{n}{\lambda},\\
(1-o(1))e^{-c}\frac{c}{\sqrt{\ln n}} & \text{ if } s \geq \frac{n}{\lambda\sqrt{\ln n}},\\
(1-o(1))ce^{-c}\lambda\frac{s}{n} & \text{ if } s < \frac{n}{\lambda\sqrt{\ln n}}.
\end{cases}
\end{align}
Obviously, computing the drift is non-trivial, and the major contribution of the paper. Despite the complexity of the formula, once we know it we can easily obtain a runtime bound by the variable drift theorem:
\begin{align}\label{drifteq:example1lambdaOM2}
E[T] \leq \frac{1}{h(1)} + E\left[\int_{1}^{X_{\max}} \frac{1}{h(\sigma)} d\sigma\right].
\end{align}
The integral can now be computed by splitting it into six ranges, and evaluating it with elementary calculus. Actually, $h(\sigma)$ is constant for all ranges except for the last one, which gives one of the leading terms:
\begin{align}\label{drifteq:example1lambdaOM3}
\int_{1}^{n/(\lambda\sqrt{\ln n})} (1+o(1))\frac{e^cn}{c\lambda\sigma} d\sigma = (1+o(1))\frac{e^cn \ln((n/(\lambda\sqrt{\ln n}))}{c\lambda}.
\end{align}
Proceeding like this for all six ranges, the authors obtain the final result
\begin{align}\label{drifteq:example1lambdaOM4}
E[T] \leq (1+o(1))\left(\frac{e^c}{c}\cdot \frac{n \ln n}{\lambda} + \frac12 \cdot \frac{n \ln \ln \lambda}{\ln \lambda}\right).
\end{align}
The authors also prove a matching lower bound by the techniques discussed in Section~\ref{driftsec:lower}

\end{example}

\begin{example}[Island Model on \om]\label{driftexample:island}
Doerr, Fischbeck, Frahnow, Friedrich, K{\"o}tzing, and Schirneck \cite{doerr2017island} studied island models in various topologies. For the complete graph as migration topology, the algorithm consists of $\lambda$ independent \ooeas, except that every $\tau$ rounds all individuals are updated by the current best search point, see Algorithm~\ref{driftalgo:island}.
\begin{algorithm2e}\label{driftalgo:island}
 Choose $x^{(0,1)},\ldots,x^{(0,\lambda)}  \in \{0,1\}^n$ uniformly at random\;
\Indp
 \Indm
 \For{$t=0,1,2,\ldots$}{
 \For{$i=1,\ldots,\lambda$}{
 Create $y^{(t,i)}$ by flipping each bit of $x^{(t,i)}$ independently with probability $1/n$\; 
 \eIf{$f(y^{(t,i)}) \leq f(x^{(t,i)})$}{$x^{(t+1,i)} \assign y^{(t,i)}$\;}{$x^{(t+1,i)} \assign x^{(t,i)}$\;}
	 }
 \If{$(t+1 \bmod \tau) = 0$}{
 \For{$i=1,\ldots,\lambda$}{
 $y \assign \text{argmin}\{f(y^{(t+1,i)})\}$ (breaking ties randomly)\; 
$x^{(t+1,i)} \assign y$\;
	}
	}
 	 }
 \caption{Island model on $\lambda$ islands and migration interval $\tau$ for minimising $f: \{0,1\}^n \to \R$.}
\end{algorithm2e}

For minimising the \om function, the most interesting phase\footnote{for some parameter regimes} turns out to be the phase when the current best search point has fitness in some interval $[s_0,s_1]$, where $s_0 = \min\{n,n\ln \lambda/(2\tau)\}$ and $s_1 = n/(\tau \ln \lambda)$. The authors define $X_t$ to be the fitness after $t$ migrations, i.e., $X_t = \om(x^{(t\tau,i)})$ holds for every $1\leq i\leq \lambda$. To identify the end of the phase, we truncate $X_t$, i.e., we define $X_t := 0$ if $\om(x^{(t\tau,i)}) <s_0$. Note that the minimal non-zero value of $X_t$ is thus $s_{\text{min}} = s_0$. The drift of $X_t$ for all $t\geq 0$ and all $s \in [s_0,s_1]$ turns out to be
\begin{align}\label{drifteq:island1}
\Delta_t(s) \geq h(s) := \frac{c \ln \lambda}{\ln(n\ln \lambda /(\tau s))}.
\end{align}
for some constant $c>0$. Note that the function $h(s)$ is increasing. Thus, by the Variable Drift Theorem~\ref{driftthm:variable} we may bound the expected number of migrations $T_0$ before a fitness of less than $s_0$ is achieved by
\begin{align}\label{drifteq:island2}
E[T_0] \leq \frac{s_0}{h(s_0)} + \frac{1}{c \ln \lambda}\int_{s_0}^{s_1} \ln\left(\frac{n\ln \lambda}{\tau \sigma}\right) d\sigma,
\end{align}
where we used $X_0 \leq s_1$. The latter integral can now be evaluated by elementary analysis, and yields
\begin{align}\label{drifteq:island3}
\int_{s_0}^{s_1} \ln\left(\frac{n\ln \lambda}{\tau \sigma}\right) d\sigma =  \frac{\tau}{n\ln \lambda}\Big[\sigma(1-\ln \sigma)\Big]_{\tau s_0/(n\ln \lambda)}^{\tau s_1/(n\ln \lambda)},
\end{align}
from which the authors can compute their runtime bounds. We refrain from stating the final result since it involves several case distinction with respect to $\tau$ and $\lambda$.
\end{example}

\subsection{Multiplicative Drift}\label{driftsec:multiplicative}

A very important special case of variable drift is \emph{multiplicative drift}, where the drift is proportional to the potential. Introduced in~\cite{DoerrJW10g,DoerrJW12,doerr2013adaptive}, it has become the most widely used variant of drift analysis in evolutionary algorithms. In fact, all the examples~\ref{driftexample:RLSOM},~\ref{driftexample:RLSshortcuts},~\ref{driftexample:RLSBinVal}, and~\ref{driftexample:randomdecline} had multiplicative drift. In particular, Examples~\ref{driftexample:RLSshortcuts},~\ref{driftexample:RLSBinVal}, and~\ref{driftexample:randomdecline} show that the same limitations as for variable drift apply.

\begin{theorem}[Multiplicative Drift~\cite{DoerrJW12}, special case of Theorem~\ref{driftthm:variable}]
\label{driftthm:multiplicative}
Let $(X_t)_{t\geq 0}$ be a sequence of non-negative random variables with a finite state space $\mathcal{S} \subseteq \R_0^+$ such that $0 \in \mathcal S$. Let $s_{\text{min}} := \min(\mathcal{S} \setminus \{0\})$, let $T := \inf\{t \geq 0 \mid X_t =0\}$, and for $t \geq 0$ and $s \in \mathcal S$ let $\Delta_t(s) := E[X_{t} - X_{t+1} \mid X_t = s]$. Suppose there exists $\delta >0$ such that for all $s\in \mathcal{S}\setminus\{0\}$ and all $t\geq 0$ the drift is
\begin{align}\label{drifteq:multdrift1}
\Delta_t(s) \geq \delta s.
\end{align}
Then
\begin{align}\label{drifteq:multdrift2}
E[T] \leq \frac{1+E[\ln(X_0/s_{\text{min}})]}{\delta}.
\end{align}
\end{theorem}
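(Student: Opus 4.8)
The plan is to derive the Multiplicative Drift Theorem as a direct corollary of the Variable Drift Theorem~\ref{driftthm:variable}, exactly as the theorem statement advertises (``special case of Theorem~\ref{driftthm:variable}''). The multiplicative drift condition~\eqref{drifteq:multdrift1} reads $\Delta_t(s) \geq \delta s$, so I would simply set $h(s) := \delta s$. This $h$ is clearly increasing and positive on $\R^+$, so all hypotheses of Theorem~\ref{driftthm:variable} are met, and I may plug $h$ into the conclusion~\eqref{drifteq:variabledrift2} and evaluate the resulting integral.

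First I would compute the two terms of the variable-drift bound. The boundary term is $s_{\text{min}}/h(s_{\text{min}}) = s_{\text{min}}/(\delta s_{\text{min}}) = 1/\delta$. For the integral term, since $1/h(\sigma) = 1/(\delta\sigma)$, I would use that an antiderivative of $1/\sigma$ is $\ln\sigma$, obtaining
\begin{align*}
\int_{s_{\text{min}}}^{X_0} \frac{1}{\delta\sigma}\, d\sigma = \frac{1}{\delta}\big(\ln X_0 - \ln s_{\text{min}}\big) = \frac{1}{\delta}\ln\!\left(\frac{X_0}{s_{\text{min}}}\right).
\end{align*}
Taking expectations over $X_0$ and adding the boundary term then yields
\begin{align*}
E[T] \leq \frac{1}{\delta} + E\!\left[\frac{1}{\delta}\ln\!\left(\frac{X_0}{s_{\text{min}}}\right)\right] = \frac{1 + E[\ln(X_0/s_{\text{min}})]}{\delta},
\end{align*}
which is precisely~\eqref{drifteq:multdrift2}.

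Honestly, there is no real obstacle here: the whole proof is a one-line substitution $h(s)=\delta s$ followed by an elementary integration, so the only thing to be slightly careful about is the technical hypothesis (mentioned in a footnote in the general setup) that $s_{\text{min}} = \min(\mathcal{S}\setminus\{0\})$ is strictly positive, which is automatic since $\mathcal{S}$ is finite and $s_{\text{min}}$ is a genuine minimum of positive reals. One could alternatively give a self-contained proof by explicitly rescaling via $g(s) = (1+\ln(s/s_{\text{min}}))/\delta$ for $s\geq s_{\text{min}}$ and checking the additive drift bound directly, mirroring the proof of Theorem~\ref{driftthm:variable}, but invoking the Variable Drift Theorem as a black box is cleaner and is clearly the intended route given the theorem's phrasing.
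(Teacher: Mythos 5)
Your proposal is correct and matches the paper's (implicit) argument exactly: the paper states the result as a special case of Theorem~\ref{driftthm:variable} and offers no separate proof, precisely because substituting $h(s)=\delta s$ and evaluating $s_{\text{min}}/h(s_{\text{min}}) + \int_{s_{\text{min}}}^{X_0}\frac{d\sigma}{\delta\sigma}$ gives \eqref{drifteq:multdrift2} immediately. Your computation of both terms is right, so there is nothing to add.
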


We conclude this section by giving some applications of the multiplicative drift theorem.

\begin{example}[\ooea on Linear Functions]\label{driftexample:omlinear}
One of the cornerstones in the theory of evolutionary algorithms is the analysis of linear pseudo-Boolean functions $f:\{0,1\}^n \to \R$, i.e., functions of the form $f(x) = \sum_{i=1}^n w_ix_i$, where the $w_i$ are constants. To avoid trivialities, we assume that the weights are non-zero, and by symmetry of the search space we may assume that they are non-negative and sorted, $w_1 \geq w_2 \geq \ldots \geq w_n >0$. We have already seen two examples of such functions: \om in Example~\ref{driftexample:RLSOM} and \BinVal in Example~\ref{driftexample:RLSBinVal}.

To analyse how the \ooea with mutation rate $c=1/n$ (Algorithm~\ref{driftalgo:oplea} with offspring population size $\lambda =1$) minimses a linear function, a naive approach is to use the fitness as potential, $X_t := f(x^{(t)})$. Similar as for RLS on \BinVal, this yields a multiplicative drift of at least
\begin{align}\label{drifteq:omlinear1}
\Delta_t(s) \geq \Omega(s/n),
\end{align}
since the \ooea has at least a constant probability to perform an RLS step, i.e., to flip exactly one bit. Therefore, the Multiplicative Drift Theorem gives the bound
\begin{align}\label{drifteq:omlinear2}
E[T] \leq O\left(\frac{1+E[\ln(X_0/w_n)]}{\delta}\right).
\end{align}
For \om-like functions where all weights are similar, this bound is $O(n\ln n)$, which turns out to be tight. However, for other linear function like \BinVal, the bound is not tight, for the same reason as for RLS on \BinVal (Example~\ref{driftexample:RLSBinVal}). Rather, the expected runtime is $\Theta(n\ln n)$, as was first shown by Droste, Jansen, and Wegener in~\cite{DrosteJW02}

For the \om potential $\text{OM}_t := \om(x^{(t)})$ the situation is rather interesting. For functions like \BinVal, there are search points (e.g., the search point $(1,0,\ldots,0)$ where only the highest-valued bit is not optimised yet) in which the drift is negative, i.e., $E[\text{OM}_t-\text{OM}_{t+1} \mid x^{(t)} = (1,0,\ldots,0)] <0$. Nevertheless, J\"agersk\"upper showed~\cite{Jagerskupper08} by a coupling argument that bits of larger weight are more likely to be optimised, so that we still have a multiplicative drift~\cite{doerr2010drift} for all $t\geq 0$ and all $s \in \{1,\ldots,n\}$,
\begin{align}\label{drifteq:omlinear3}
\Delta_t(s) = E[\text{OM}_t-\text{OM}_{t+1} \mid \text{OM}_t = s] = \Omega(s/n),
\end{align}
from which a runtime bound $E[T] = O(n\ln n)$ follows. So this is one of the cases where it is beneficial to avoid filtrations and pointwise drift, see also the paragraph \emph{Drift Versus Expected Drift} in Section~\ref{driftsec:variants}.

The results can be tightened if one considers more carefully crafted potentials. Doerr, Johannsen, and Winzen showed~\cite{DoerrJW10g}, building on ideas from~\cite{HeYao:04:drift}, that the drift function $\varphi(x) := \sum_{i=1}^{\lfloor n/2 \rfloor} \tfrac{5}{4}x_i + \sum_{i=\lfloor n/2 \rfloor+1}^{n} x_i$ even has \emph{pointwise} multiplicative drift, i.e., for all $t \geq 0$ and all search points $x \in \{0,1\}^n$,
\begin{align}\label{drifteq:omlinear4}
E[\varphi(x^{(t)})-\varphi(x^{(t+1)}) \mid x^{(t)} = x] = \Omega(\varphi(x)/n).
\end{align}
This yields again the runtime bound $E[T] = O(n\ln n)$. Pointwise multiplicative drift giving similar runtime bounds can also be achieved by other potential functions~\cite{DoerrJW12}.

Similar techniques can also be used to show that the \ooea has still runtime $\Theta(n \ln n)$ on every linear function if the mutation rate is $c/n$ for an arbitrary constant $c$~\cite{doerr2013adaptive,witt2013tight,lengler2016drift}. However, this requires a considerably more complicated potential function which must necessarily depend on the mutation rate~\cite{doerr2012non}.
\end{example}

\begin{example}[Minimum Spanning Trees]\label{driftexample:MST}
Consider the following \emph{minimum spanning tree (MST)} problem proposed in~\cite{neumann2007randomized}. Let $G = (V,E)$ be a connected graph with $n$ vertices, $m$ edges $e_1, \ldots,e_m$, and positive integer edge weights $w_1, \ldots,w_m$. We denote by $w_{\max} := \max_i{w_i}$ the maximum weight. A bit string $x \in \{0,1\}^m$ represents a subgraph of $G$ with vertex set $V$, where the edge $e_i$ is present if and only if $x_i=1$. The fitness of a bit string is given by $f(x) = \sum_{i=1}^n w_i x_i + p(x)$, where $p(x)$ is a punishment term for non-trees that ensures to find a spanning tree quickly, and to stay within the set of spanning trees afterwards.

We consider the \oea on this problem. In~\cite{neumann2007randomized} it was shown that the algorithm quickly finds a spanning tree, so we assume for simplicity that the initial search point $x^{(0)}$ represents such a tree. We consider the potential function $X_t := \sum_{i=1}^n w_i x^{(t)}_i - w_{\text{opt}}$, where $w_{\text{opt}}$ is the weight of a minimum spanning tree. Then relying on results from~\cite{neumann2007randomized}, in~\cite{DoerrJW12} it is shown that the potential function has a multiplicative drift of
\begin{align}\label{drifteq:MST1}
\Delta_t(s) = E[X_t-X_{t+1} \mid X_t = s] \geq  \frac{s}{em^2}.
\end{align}
Hence, by the Multiplicative Drift Theorem~\ref{driftthm:multiplicative} the expected runtime (starting from a spanning tree) is at most
\begin{align}\label{drifteq:MST2}
E[T] \leq em^2(1 + \ln(mw_{\max})),
\end{align}
since the minimum potential of a non-optimal search point is at least $s_{\text{min}} \geq 1$, and since $mw_{\max}$ is an upper bound on $X_0$. It is an open question whether \eqref{drifteq:MST2} is tight, since the best lower bound is $\Omega(m^2\ln m)$~\cite{neumann2007randomized}, which is a \emph{tight} bound for RLS~\cite{reichel2010evolutionary}.
\end{example}

There are numerous other applications of the multiplicative drift theorem, including evolutionary algorithms on other problems~\cite{DoerrJ10,DoerrJW12,doerr2015optimizing,giessen2016robustness}, ant-colony optimisation~\cite{friedrich2016robustness}, island models~\cite{lissovoi2017runtime}, genetic programming~\cite{doerr2017bounding}, and estimation of distribution algorithms~\cite{friedrich2017compact}.

\section{Advanced Drift Theorems}\label{driftsec:advanced}

In this section we will review the most important developments in drift analysis in the last years, in particular lower and tail bounds, weak drift, negative drift, and population drift. Note that other than in the previous section, many advanced theorems, especially on tail bounds, make assumptions on the \emph{pointwise drift}, cf.~Section~\ref{driftsec:situation}.

\subsection{Lower Bounds}\label{driftsec:lower}

As discussed in Section~\ref{driftsec:variable}, the Variable Drift Theorem and the Multiplicative Drift Theorem only have a chance to give tight results if we have some restriction on the probability of making large jumps. From the earlier discussion on pages~\pageref{driftsec:tightness}ff, it is it clear that we get a matching lower bound for the Variable Drift Theorem if we apply the estimates~\eqref{drifteq:proofvariable2},~\eqref{drifteq:proofvariable3}, and~\eqref{drifteq:proofvariable4} only in tight cases. In particular, this is the case if $h(X_{t+1})/h(X_t)$ is always close to $1$. Following this idea, we get the following lower bound.
\begin{theorem}[Variable Drift Theorem, Lower Bound 1]\label{driftthm:variablelower1}
Let $(X_t)_{t\geq 0}$ be a sequence of non-negative random variables with a finite state space $\mathcal{S} \subseteq \R_0^+$ such that $0 \in \mathcal S$. Let $s_{\text{min}} := \min(\mathcal{S} \setminus \{0\})$, let $T := \inf\{t \geq 0 \mid X_t =0\}$, and for $t \geq 0$ and $s \in \mathcal S$ let $\Delta_t(s) := E[X_{t} - X_{t+1} \mid X_t = s]$. Suppose there is an increasing function $h: \R^+ \rightarrow \R^+$ and a constant $c\geq 1$ such that for all $s \in \mathcal{S}\setminus\{0\}$ and all $t\geq 0$ the following conditions hold.
\begin{align}\label{drifteq:variablelowerA1}
\Delta_t(s) \leq h(s),
\end{align}
\begin{align}\label{drifteq:variablelowerA2}
\frac{1}{c} \leq \frac{h(\max\{X_{t+1},s_{\text{min}}\})}{h(X_t)} \leq c.
\end{align}
Then
\begin{align}\label{drifteq:variablelowerA3}
E[T] \geq \frac{1}{c}\cdot\left(\frac{s_{\text{min}}}{h(s_{\text{min}})} + E\left[\int_{s_{\text{min}}}^{X_0} \frac{1}{h(\sigma)} d\sigma\right]\right),
\end{align}
where the expectation on the latter term is over the random choice of $X_0$.
\end{theorem}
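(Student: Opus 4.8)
The plan is to imitate the proof of the Variable Drift Theorem~\ref{driftthm:variable}, but to run it ``in reverse''. I would rescale the potential with the \emph{same} function $g$ from~\eqref{drifteq:proofvariable0}, set $Y_t := g(X_t)$, and then apply the \emph{lower} bound of the Additive Drift Theorem~\ref{driftthm:additive}(b) to $Y_t$. Since $g$ is strictly increasing we have $Y_t = 0 \iff X_t = 0$, so $Y_t$ has the same hitting time $T$, and by~\eqref{drifteq:proofvariable0} its initial value satisfies $E[Y_0] = \tfrac{s_{\text{min}}}{h(s_{\text{min}})} + E\big[\int_{s_{\text{min}}}^{X_0} \tfrac{1}{h(\sigma)}\,d\sigma\big]$. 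Hence it suffices to show that the rescaled drift obeys $E[Y_t - Y_{t+1}\mid X_t = s] \le c$ for all $s \in \mathcal S\setminus\{0\}$: Theorem~\ref{driftthm:additive}(b) with $\delta = c$ then gives $E[T] \ge E[Y_0]/c$, which is exactly~\eqref{drifteq:variablelowerA3}. Because $\mathcal S$ is finite, part (b) is available without the convergence issue discussed after Theorem~\ref{driftthm:additive}.

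First I would establish a \emph{reverse} of the central inequality~\eqref{drifteq:proofvariable1}. The function $g$ is concave, since its derivative $1/h$ is non-increasing as $h$ is increasing, so the tangent line at $r$ lies above $g$ and yields, for all $s \in \mathcal S\setminus\{0\}$ and all $r \ge 0$,
\[
g(s) - g(r) \le \frac{s - r}{h(\max\{r, s_{\text{min}}\})}.
\]
For $r \ge s_{\text{min}}$ this follows from the monotonicity of $1/h$ inside the integral exactly as in~\eqref{drifteq:proofvariable2}--\eqref{drifteq:proofvariable3}, and for $r = 0$ directly from~\eqref{drifteq:proofvariable0} as in~\eqref{drifteq:proofvariable4}. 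Substituting $r = X_{t+1}$ and taking expectations gives
\[
E[Y_t - Y_{t+1}\mid X_t = s] \le E\!\left[\frac{s - X_{t+1}}{h(\max\{X_{t+1}, s_{\text{min}}\})}\,\middle|\, X_t = s\right],
\]
and this is precisely the quantity that condition~\eqref{drifteq:variablelowerA2} was designed to control, since it pins the denominator to within a factor $c$ of $h(s)$.

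For the downward steps $X_{t+1} \le s$ the argument is clean: \eqref{drifteq:variablelowerA2} gives $h(\max\{X_{t+1}, s_{\text{min}}\}) \ge h(s)/c$, so each such term is at most $c\,(s - X_{t+1})/h(s)$. If the process never moves away from $0$ — as happens for elitist algorithms that use the fitness as potential — then summing these and invoking~\eqref{drifteq:variablelowerA1} already gives $E[Y_t - Y_{t+1}\mid X_t = s] \le c\,\Delta_t(s)/h(s) \le c$, and we are done.

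The hard part will be the \emph{upward} steps $X_{t+1} > s$. There the reverse inequality only delivers a bound of $\tfrac1c (s - X_{t+1})/h(s) < 0$, and the subtlety is that a process allowed to jump away from $0$ must, in order to keep $\Delta_t(s) \le h(s)$, make correspondingly more downward progress, which threatens to inflate the downward contribution computed above. This is exactly where the \emph{upper} bound $h(\max\{X_{t+1}, s_{\text{min}}\}) \le c\,h(s)$ in~\eqref{drifteq:variablelowerA2} becomes indispensable: it caps how far a single step can raise the rescaled potential, so that the upward and downward contributions can be balanced against $\Delta_t(s) \le h(s)$. I expect making this balancing precise — and thereby confirming that the estimate is tight exactly when $h(X_{t+1})/h(X_t)$ stays close to $1$, as anticipated in the discussion preceding the theorem — to be the technical heart of the argument, and the step where the two-sidedness of~\eqref{drifteq:variablelowerA2} is genuinely used rather than just its lower half.
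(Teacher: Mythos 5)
You have correctly reconstructed the argument the paper has in mind---rescale by the function $g$ from \eqref{drifteq:proofvariable0}, prove the reverse of inequality \eqref{drifteq:proofvariable1}, and feed the resulting bound on $E[Y_t-Y_{t+1}\mid X_t=s]$ into Theorem~\ref{driftthm:additive}(b)---and your treatment of the downward steps is complete and correct. But the difficulty you flag at the upward steps is not a balancing act that remains to be made precise: the per-step bound $E[Y_t-Y_{t+1}\mid X_t=s]\le c$ that your strategy needs is quantitatively false. Writing $D^+:=E[(X_t-X_{t+1})^+\mid X_t=s]$ and $D^-:=E[(X_{t+1}-X_t)^+\mid X_t=s]$, your two case estimates combine to $E[Y_t-Y_{t+1}\mid X_t=s]\le \bigl(cD^+-D^-/c\bigr)/h(s)$, while the only constraint coming from \eqref{drifteq:variablelowerA1} is $D^+-D^-\le h(s)$; in the tight case $D^+=h(s)+D^-$ the right-hand side equals $c+(c-1/c)\,D^-/h(s)$, which is unbounded in $D^-$. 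The upper half of \eqref{drifteq:variablelowerA2} cannot rescue this, because it makes upward steps \emph{cheap} in the $g$-scale (cost $(X_{t+1}-X_t)/(c\,h(s))$ per unit) whereas downward steps into the region where $h\approx h(s)/c$ are \emph{expensive} (gain $c(X_t-X_{t+1})/h(s)$ per unit); the $X$-drift budget trades these at par, so a process that buys extra downward probability by occasionally jumping up gains strictly more $g$-potential per time step than $c$.

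This is not merely a failure of the proof route: the statement itself fails for non-monotone processes. Take $c=2$, $\mathcal S=\{0,1,2,4,6,8,10,12\}$, $s_{\text{min}}=1$, and $h\equiv 1/2$ on $[1,2)$, $h(2)=1$, $h\equiv 2$ on $(2,12]$ (non-decreasing). From state $2$ move to $0$ with probability $11/12$ and to $12$ with probability $1/12$, so that $\Delta_t(2)=\tfrac{11}{12}\cdot 2-\tfrac1{12}\cdot 10=1=h(2)$ and \eqref{drifteq:variablelowerA2} holds with equality on both sides; from $12$ descend deterministically $12\to10\to8\to6\to4\to2$, each step having drift $2=h$ and $h$-ratio in $[1/2,1]$. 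Then $g(2)=4$ and $g(12)=9$, so $E[Y_t-Y_{t+1}\mid X_t=2]=(11\cdot 4-1\cdot 5)/12=13/4>c$, and solving the recurrence gives $E[T\mid X_0=2]=17/11<2=\tfrac1c\,g(2)$, contradicting \eqref{drifteq:variablelowerA3}. The missing hypothesis is $X_{t+1}\le X_t$, i.e.\ exactly the monotonicity conditions \eqref{drifteq:variablelowerB0} and \eqref{drifteq:multiplicativelower0} imposed by the neighbouring lower bounds, Theorems~\ref{driftthm:variablelower2} and~\ref{driftthm:multiplicativelower}. Under that additional assumption $D^-=0$, your downward-step computation already yields $E[Y_t-Y_{t+1}\mid X_t=s]\le c\,\Delta_t(s)/h(s)\le c$, and your proof is finished; without it, no amount of balancing will close the gap.
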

 Note that the theorem gives a direct comparison between upper and lower bound: it says that they differ at most by a factor $c$. Despite its arguably natural form, it seems that the lower bound has never been formulated in this version in the literature,\footnote{though Feldmann and K\"otzing~\cite{feldmann2013optimizing} give bounds following the same ideas.} perhaps because it usually does not give tight leading constants. For example, consider RLS on \om as in Example~\ref{driftexample:RLSOM}. There $X_{t}$ is given by the fitness, and $h(s)= s/n$. The largest jump occurs when $X_t$ decreases from $2$ to $1$, in which case $h(X_{t+1})/h(X_t) = 1/2$. Thus the lower bound is a factor $2$ from the upper bound.

Doerr, Fouz, and Witt~\cite{DoerrFW11} have given a variant which usually gives a tighter lower bound. In fact, it gives a matching lower bound in many applications. Note, however, that the theorem has the rather strong condition that the sequence $X_{t}$ is non-increasing, see also the discussion after Theorem~\ref{driftthm:multiplicativelower}.
\begin{theorem}[{Variable Drift Theorem, Lower Bound 2~\cite{DoerrFW11}}]
\label{driftthm:variablelower2}
Let $(X_t)_{t\geq 0}$ be a sequence of non-negative random variables with a finite state space $\mathcal{S} \subseteq \R_0^+$ such that $0 \in \mathcal S$, and with associated filtration $\mathcal{F}_t$. Let $s_{\text{min}} := \min(\mathcal{S} \setminus \{0\})$, and let $T := \inf\{t \geq 0 \mid X_t =0\}$. 
Suppose there are two functions $\xi,h: \R_0^+ \rightarrow \R^+$ such that $h$ is monotone increasing, and such that for all $s \in \mathcal{S}\setminus\{0\}$ and for all $t\geq 0$ the following three conditions hold.
\begin{align}\label{drifteq:variablelowerB0}
X_{t+1} \leq X_t.
\end{align}
\begin{align}\label{drifteq:variablelowerB2}
X_{t+1} \geq \xi(X_t).
\end{align}
\begin{align}\label{drifteq:variablelowerB1}
E[X_t-X_{t+1} \mid \mathcal{F}_t, X_t = s] \leq h(\xi(s)).
\end{align}
Then
\begin{align}\label{drifteq:variablelowerB3}
E[T] \geq \frac{s_{\text{min}}}{h(s_{\text{min}})} + E\left[\int_{s_{\text{min}}}^{X_0} \frac{1}{h(\sigma)} d\sigma\right],
\end{align}
where the expectation on the latter term is over the random choice of $X_0$.
\end{theorem}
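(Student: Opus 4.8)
The plan is to mirror the proof of the Variable Drift Theorem~\ref{driftthm:variable}, but to run every estimate in the opposite direction. I reuse verbatim the rescaling $g$ from~\eqref{drifteq:proofvariable0}, set $Y_t := g(X_t)$, and aim to show that $Y_t$ now has drift \emph{at most} $1$. Since $g$ is strictly increasing with $g(0)=0$, the processes $(X_t)$ and $(Y_t)$ have the same hitting time $T$ of $0$, so the \emph{lower}-bound half of the Additive Drift Theorem~\ref{driftthm:additive}(b), applied with $\delta = 1$, will give $E[T] \geq E[Y_0] = E[g(X_0)]$, which is exactly the right-hand side of~\eqref{drifteq:variablelowerB3}.

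The crux is to establish the \emph{reverse} of inequality~\eqref{drifteq:proofvariable1}: for the realised values $s = X_t$ and $r = X_{t+1}$,
\[
g(s) - g(r) \leq \frac{s-r}{h(\xi(s))}.
\]
This is where the two extra hypotheses are used. The non-increase condition~\eqref{drifteq:variablelowerB0} gives $r \leq s$, so $g(s)-g(r)=\int_r^s \tfrac{1}{h(\sigma)}\,d\sigma$ with the integral running in the positive direction; the floor condition~\eqref{drifteq:variablelowerB2} gives $\sigma \geq r \geq \xi(s)$ over the whole range, hence $h(\sigma)\geq h(\xi(s))$ by monotonicity of $h$, and bounding $1/h(\sigma)\leq 1/h(\xi(s))$ before integrating yields the claim. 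This is the exact mirror of~\eqref{drifteq:proofvariable2}, where $1/h(\sigma)$ was bounded \emph{below} by $1/h(s)$; here we bound it \emph{above}, and the lower endpoint of the relevant range is governed by $\xi(s)$ rather than by $s$. As in the proof of Theorem~\ref{driftthm:variable}, the step that actually reaches the target (where $X_{t+1}$ drops below $s_{\text{min}}$) must be checked separately against the linear branch of $g$ on $[0,s_{\text{min}}]$.

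Granting the key inequality, the drift bound is immediate: working pointwise on a history with $X_t = s$ and pulling out the now-constant factor $1/h(\xi(s))$,
\[
E[Y_t-Y_{t+1}\mid \mathcal{F}_t, X_t=s] \leq \frac{1}{h(\xi(s))}\,E[X_t-X_{t+1}\mid \mathcal{F}_t, X_t=s] \leq 1,
\]
the last inequality being precisely condition~\eqref{drifteq:variablelowerB1}. Averaging over histories gives $E[Y_t-Y_{t+1}\mid X_t=s]\leq 1$, which is the hypothesis of Theorem~\ref{driftthm:additive}(b). I expect the main obstacle to be the boundary/jump-to-zero case of the reverse inequality, together with seeing why~\eqref{drifteq:variablelowerB1} is (correctly) stated with $h(\xi(s))$ rather than $h(s)$: since $\xi(s)\leq r \leq s$ forces $h(\xi(s))\leq h(s)$, this is a strictly stronger drift assumption, and it is exactly the strength needed so that the worst-case weight $1/h(\xi(s))$ arising in the integral cancels the drift down to the clean value $1$. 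This asymmetry is the structural reason the lower bound requires the three additional hypotheses~\eqref{drifteq:variablelowerB0}--\eqref{drifteq:variablelowerB1} that the upper bound does not.
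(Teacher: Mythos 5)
Your proof is correct. The paper itself states Theorem~\ref{driftthm:variablelower2} without proof (deferring to~\cite{DoerrFW11}), but your argument is exactly the route the paper advertises at the start of Section~\ref{driftsec:lower}: rerun the proof of Theorem~\ref{driftthm:variable} with the estimates~\eqref{drifteq:proofvariable2}--\eqref{drifteq:proofvariable4} reversed and finish with Theorem~\ref{driftthm:additive}(b). Your key inequality $g(s)-g(r)\leq (s-r)/h(\xi(s))$ goes through in both the interior case ($r\geq s_{\text{min}}$, using $\sigma\geq r\geq\xi(s)$) and the boundary case $r=0$ (where~\eqref{drifteq:variablelowerB2} forces $\xi(s)=0$, so $h(\sigma)\geq h(0)=h(\xi(s))$ handles both the linear branch of $g$ and the integral term), and your reading of why the hypothesis is phrased with $h(\xi(s))$ rather than $h(s)$ is exactly right.
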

To apply Theorem~\ref{driftthm:variablelower2}, one should first choose $\xi$ such that \eqref{drifteq:variablelowerB2} is satisfied, and afterwards choose $h$ in sich a way that the composition $h\circ \xi$ is the drift,\footnote{In particular, the function $h$ in Theorem~\ref{driftthm:variablelower2} is not identical to the function $h$ in the upper bound version, Theorem~\ref{driftthm:variable}.} cf.~Example~\ref{driftexample:RLSOMlower} below.
 
We remark that Gie\ss en and Witt~\cite{giessen2017optimal} have developed a version in which the deterministic condition \eqref{drifteq:variablelowerB2} is replaced by a probabilistic condition. The exact formulation is rather technical. However, the theorem simplifies for multiplicative drift~\cite{witt2013tight}. We give here the version from~\cite{lehre2013general}, which assumes bounds on the probability that $X_t$ drops by more than a multiplicative factor. A version in which an \emph{additive} bound on $|X_t-X_{t+1}|$ is assumed can be found in~\cite{doerr2017bounding}.
\begin{theorem}[{Multiplicative Drift Theorem, Lower Bound~\cite{witt2013tight,lehre2013general}}]
\label{driftthm:multiplicativelower}
Let $(X_t)_{t\geq 0}$ be a sequence of non-negative random variables with a finite state space $\mathcal{S} \subseteq \R_0^+$ such that $0 \in \mathcal S$, and with associated filtration $\mathcal{F}_t$. Let $s_{\text{min}} := \min(\mathcal{S} \setminus \{0\})$, and let $T := \inf\{t \geq 0 \mid X_t =0\}$. 
Suppose there are two constants $0 < \beta, \delta \leq 1$ such that for all $s \in \mathcal{S} \setminus \{0\}$ and all $t \geq 0$ the following conditions hold.
\begin{align}\label{drifteq:multiplicativelower0}
X_{t+1} \leq X_t.
\end{align}
\begin{align}\label{drifteq:multiplicativelower1}
\Pr[X_{t} -X_{t+1} \geq \beta X_t \mid \mathcal{F}_t, X_t = s] \leq \frac{\beta \delta}{1+\ln(s/s_{\text{min}})}.
\end{align}
\begin{align}\label{drifteq:multiplicativelower2}
E[X_t-X_{t+1} \mid \mathcal{F}_t, X_t = s] \leq \delta s.
\end{align}
Then
\begin{align}\label{drifteq:multiplicativelower3}
E[T] \geq \frac{1-\beta}{1+\beta} \cdot \frac{1+E[\ln(X_0 /s_{\text{min}})]}{\delta}.
\end{align}
\end{theorem}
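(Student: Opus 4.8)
The plan is to reduce to the Additive Drift Theorem~\ref{driftthm:additive}(b) via the logarithmic rescaling that is standard for multiplicative drift, but now bounding the transformed drift \emph{from above}. Concretely, I would set $g(0) := 0$ and $g(s) := 1 + \ln(s/s_{\text{min}})$ for $s \geq s_{\text{min}}$, and define $Y_t := g(X_t)$. Since $g$ is strictly increasing and vanishes only at $0$, the hitting time $T$ is unchanged, $Y_t$ again lives on a finite subset of $\R_0^+$ containing $0$, and $E[Y_0] = 1 + E[\ln(X_0/s_{\text{min}})]$. The goal is therefore to show that the one-step drift of $Y_t$ is bounded above by $\tfrac{1+\beta}{1-\beta}\delta$, after which part (b) of Theorem~\ref{driftthm:additive} yields the claim directly.

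To bound $E[Y_t - Y_{t+1}\mid \mathcal{F}_t, X_t = s]$ for a fixed history with $X_t = s \geq s_{\text{min}}$, I would split according to the ``large jump'' event $L := \{X_t - X_{t+1} \geq \beta s\}$, whose probability is controlled by~\eqref{drifteq:multiplicativelower1}. On $L^c$ one has $X_{t+1} > (1-\beta)s$, so using $\ln(s/X_{t+1}) \leq (s-X_{t+1})/X_{t+1}$ together with this lower bound on $X_{t+1}$ gives $g(s) - g(X_{t+1}) \leq (s - X_{t+1})/((1-\beta)s)$; taking expectations and invoking~\eqref{drifteq:multiplicativelower2} (and using $X_{t+1}\leq X_t$ from~\eqref{drifteq:multiplicativelower0}, so that $s - X_{t+1} \geq 0$ and the indicator of $L^c$ can be dropped) bounds the small-jump contribution by $\delta/(1-\beta)$. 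On $L$, since $X_{t+1}$ is either $0$ or at least $s_{\text{min}}$ and $g \geq 0$, the decrease $g(s)-g(X_{t+1})$ is at most $g(s) = 1 + \ln(s/s_{\text{min}})$; multiplying by the probability bound $\beta\delta/(1+\ln(s/s_{\text{min}}))$ from~\eqref{drifteq:multiplicativelower1} bounds this contribution by exactly $\beta\delta$, the cancellation being the whole point of the tuned denominator in~\eqref{drifteq:multiplicativelower1}.

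Adding the two contributions gives $E[Y_t - Y_{t+1}\mid \mathcal{F}_t, X_t = s] \leq \beta\delta + \delta/(1-\beta) = \delta(1+\beta-\beta^2)/(1-\beta) \leq \delta(1+\beta)/(1-\beta)$. Since this pointwise bound holds for every history with $X_t = s$, and since $g$ is strictly increasing so that $\{Y_t = g(s)\} = \{X_t = s\}$, the tower property yields the same bound for $E[Y_t - Y_{t+1}\mid Y_t = g(s)]$. Thus Theorem~\ref{driftthm:additive}(b) applies to $Y_t$ with drift constant $\delta(1+\beta)/(1-\beta)$ and gives $E[T] \geq E[Y_0]\cdot(1-\beta)/((1+\beta)\delta)$, which is precisely the desired bound.

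I expect the main obstacle to be the small-jump estimate: the transform $g$ is only well behaved when $X_{t+1}$ is not too small relative to $s$, which is exactly why the split at threshold $\beta s$ is needed and why the condition $X_{t+1} > (1-\beta)s$ on $L^c$ must be used to replace the potentially unbounded factor $1/X_{t+1}$ by the harmless $1/((1-\beta)s)$. Handling the boundary case $X_{t+1} = 0$ (where $g(X_{t+1}) = 0$ rather than a logarithm) inside the large-jump term, and confirming that conditioning on $X_t = s$ versus on the full filtration is interchangeable for the application of the additive theorem, are the remaining technical points.
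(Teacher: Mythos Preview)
The paper does not actually prove Theorem~\ref{driftthm:multiplicativelower}; it merely states the result and cites~\cite{witt2013tight,lehre2013general}. Your argument is correct and is in fact the standard proof from those references: rescale by $g(s)=1+\ln(s/s_{\text{min}})$, split the decrement at the threshold $\beta s$ so that on small jumps the inequality $\ln(s/X_{t+1})\le (s-X_{t+1})/X_{t+1}$ combined with $X_{t+1}>(1-\beta)s$ and~\eqref{drifteq:multiplicativelower2} gives the $\delta/(1-\beta)$ contribution, while on large jumps the tuned denominator in~\eqref{drifteq:multiplicativelower1} cancels $g(s)$ to give the $\beta\delta$ contribution, and then apply the lower-bound half of the Additive Drift Theorem to $Y_t=g(X_t)$. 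The technical points you flag (that $X_{t+1}>0$ on $L^c$ forces $X_{t+1}\ge s_{\text{min}}$ since $\mathcal S$ is finite; that $g\ge 0$ handles the $X_{t+1}=0$ case inside $L$; that the pointwise bound descends to the conditioning on $Y_t=g(s)$ because $g$ is injective on $\mathcal S$) are all handled correctly. One cosmetic remark: you should note explicitly that $\beta=1$ makes the claimed bound trivial, so assuming $\beta<1$ in the argument (needed for $(1-\beta)s>0$) loses nothing.
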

Recently, Doerr, Doerr, and K\"otzing~\cite{doerr2017static} showed that the monotonicity condition~\eqref{drifteq:multiplicativelower0} can be completely removed if~\eqref{drifteq:multiplicativelower2} is replaced by the condition that for all $s, s' \in \mathcal{S}\setminus\{0\}$ with $s' \leq s$,
\begin{align}\label{drifteq:multiplicativelower4}
E[\max\{s'-X_{t+1},0\} \mid \mathcal{F}_t, X_t = s] \leq \delta s'.
\end{align}
The authors show that this condition is satisfied for very natural processes. In particular it is satisfied for processes with multiplicative drift if the jump probability $p(s) := \Pr[X_{t+1} \leq s' \mid \mathcal{F}_t, X_t = s]$ is a decreasing function in $s$, whenever $s' \leq s$.\footnote{In other words, it should more likely to jump into the interval $[0,s']$ if you start closer to it.} This modification extends the scope of Theorem~\ref{driftthm:multiplicativelower} considerably, since many evolutionary algorithms are non-monotone processes. Moreover, it seems likely that the proof in~\cite{doerr2017static} can be extended to generalise related lower bounds, in particular the lower bound for variable drift in Theorem~\ref{driftthm:variablelower2}.

We conclude the discussion on lower bounds with an easy example to demonstrate how to apply Theorem~\ref{driftthm:variablelower2} and~\ref{driftthm:multiplicativelower}. 
\begin{example}[RLS on \om, Lower Bound]\label{driftexample:RLSOMlower}
Consider once more RLS on \om as in Example~\ref{driftexample:RLSOM}. We want to apply Theorem~\ref{driftthm:variablelower2}. Since $X_t$ decreases by at most one, we choose $\xi(s) := s-1$ to satisfy~\eqref{drifteq:variablelowerB2} as tightly as possible. Since the drift is $\Delta_t(s) = s/n$, we choose $h(s) := (s+1)/n$ so that $h(\xi(s)) = \Delta_t(s)$. Thus we obtain the lower bound
\begin{align}\label{drifteq:exampleELSOMlower1}
E[T] & \geq \frac{s_{\text{min}}}{h(s_{\text{min}})} + E\left[\int_{s_{\text{min}}}^{X_0} \frac{1}{h(\sigma)} d\sigma\right] = \frac{1}{2/n} + E\left[\int_{1}^{X_0} \frac{n}{\sigma+1} d\sigma\right], \nonumber\\
& = \frac{n}{2} + n \cdot E[\ln(X_0+1)-\ln 2],
\end{align}
which is easily seen to be at least $n\ln n -O(n)$.

Note that Theorem~\ref{driftthm:multiplicativelower} would give a less tight bound if naively applied. To satisfy~\eqref{drifteq:multiplicativelower1} for $s=2$, it would be necessary to choose $\beta \geq 1/2$, and for $s=1$ we even need $\beta \geq 1$, which renders the bound useless. However, this problem can be overcome by truncating the search space, see~\cite{doerr2017static} for details. 
\end{example}

\subsection{Tail Bounds}\label{driftsec:tailbounds}
In some cases, we would also like to understand $T$ beyond its expectation. In particular, we may want that $T$ is concentrated, i.e., we want bounds on the probability that $T$ deviates substantially from its expectation. This is desirable for at least two reasons. Firstly, it gives more concrete guarantees on $T$, for example that the algorithm will converge with a certain number of steps with $99\%$ probability. Secondly, it might also happen that the expectation is misleading. For example, consider the following variant of the Gambler's Ruin problem. A gambler starts with $1\$$, and with each game she either wins or loses $1\$$, but the probability of losing is $1/2 +1/n$, so slightly larger than the probability $1/2 -1/n$ of winning. Let $T$ be the time until she is broke, i.e. the number of games until she has no money left. Then the drift towards $0$ is $2/n$, and therefore $E[T] = n/2$ by the Additive Drift Theorem. However, it can be computed that $\Pr[T \leq 27] \geq 70\%$, which holds even for the fair game where winning and losing is equally likely. Therefore, for large $n$ the expectation $n/2$ is rather misleading since \emph{typical values} of $T$ are very different. Such discrepancies can be ruled out by concentration results.

For the standard drift theorems we need additional assumptions on $X_t$ for such concentration results to hold, with one notable exception. The following \emph{upper tail bound for multiplicative drift} holds without any further requirements, as pointed out by Doerr and Goldberg~\cite{doerr2013adaptive}. We give the simplified formulation from~\cite{DoerrJW12}. We also present the proof of Doerr and Goldberg, which is remarkably short and elegant.
\begin{theorem}[Multiplicative Drift, Upper Tail Bound~\cite{doerr2013adaptive,DoerrJW12}]
\label{driftthm:multiplicativetail}
Let $(X_t)_{t\geq 0}$ be a sequence of non-negative random variables with a finite state space $\mathcal{S} \subseteq \R_0^+$ such that $0 \in \mathcal S$. Let $s_{\text{min}} := \min(\mathcal{S} \setminus \{0\})$, and let $T := \inf\{t \geq 0 \mid X_t =0\}$. 
Suppose that $X_0 = s_0$, and that there exists $\delta >0$ such that for all $s\in \mathcal{S}\setminus\{0\}$ and all $t\geq 0$,
\begin{align}\label{drifteq:multtail1}
E[X_t - X_{t+1} \mid X_t = s] \geq \delta s.
\end{align}
Then, for all $r \geq 0$,
\begin{align}\label{drifteq:multtail2}
\Pr\left[T >  \left\lceil\frac{r+\ln(s_0/s_{\text{min}})}{\delta}\right\rceil\right] \leq e^{-r} .
\end{align}
\end{theorem}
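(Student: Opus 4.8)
The plan is to combine an exponentially decaying bound on $E[X_t]$ with Markov's inequality; this is the elegant Doerr--Goldberg argument. First I would rewrite the drift condition~\eqref{drifteq:multtail1} as a contraction in expectation. Since $E[X_t - X_{t+1}\mid X_t = s] \geq \delta s$ is equivalent to $E[X_{t+1}\mid X_t = s]\leq (1-\delta)s$, taking the expectation over the value of $X_t$ via the tower property yields $E[X_{t+1}] \leq (1-\delta)\,E[X_t]$. Iterating this relation from $X_0 = s_0$ gives $E[X_t] \leq (1-\delta)^t s_0 \leq e^{-\delta t}s_0$, where the final step uses the standard inequality $1-\delta \leq e^{-\delta}$.

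Next I would translate the event $\{T > t\}$ into a statement about the magnitude of $X_t$. Since $T$ is the hitting time of $0$, we have $T > t$ if and only if $X_t > 0$; and because the state space $\mathcal{S}$ is finite with smallest positive value $s_{\text{min}}$, this is the same as $X_t \geq s_{\text{min}}$. Markov's inequality then gives $\Pr[T>t] = \Pr[X_t \geq s_{\text{min}}] \leq E[X_t]/s_{\text{min}} \leq e^{-\delta t}\, s_0/s_{\text{min}}$.

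Finally I would substitute the claimed value $t = \lceil (r + \ln(s_0/s_{\text{min}}))/\delta\rceil$. Rounding up only decreases the bound, so $\delta t \geq r + \ln(s_0/s_{\text{min}})$, whence $e^{-\delta t} \leq e^{-r}\cdot s_{\text{min}}/s_0$. Plugging this into the previous inequality cancels the factor $s_0/s_{\text{min}}$ and leaves exactly $e^{-r}$, which is the claim.

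There is no serious obstacle here, which is precisely why the result is celebrated as short and elegant. The one point that deserves care is the passage from ``$X_t > 0$'' to ``$X_t \geq s_{\text{min}}$'', which relies essentially on the discreteness of the state space, i.e.\ on $s_{\text{min}}$ being bounded away from $0$; this is what makes Markov's inequality strong enough to yield the clean tail bound. It is also worth emphasising that, in contrast to the lower tail bounds discussed earlier, no control on the probability of large downward jumps is required: the argument uses only the first moment $E[X_{t+1}]$, so the multiplicative drift hypothesis alone suffices.
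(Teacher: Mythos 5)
Your proposal is correct and is essentially identical to the paper's own proof: both combine the iterated contraction $E[X_t]\leq(1-\delta)^t s_0$ with Markov's inequality against the threshold $s_{\text{min}}$ and then use $1-\delta\leq e^{-\delta}$. Your write-up merely makes the tower-property step and the identification $\{X_t>0\}=\{X_t\geq s_{\text{min}}\}$ more explicit than the paper does.
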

\begin{proof}
For every fixed $\rho = \lceil\frac{r+\ln(s_0/s_{\text{min}})}{\delta}\rceil \in \N$, by Markov's inequality,
\begin{align}\label{drifteq:multtail3}
\Pr[T> \rho] = \Pr[X_\rho >0] \leq \frac{E[X_\rho]}{s_{\text{min}}} \stackrel{(*)}{\leq} (1-\delta)^{\rho}\frac{s_0}{s_{\text{min}}},
\end{align}
where (*) comes from applying equation~\eqref{drifteq:multtail1} and linearity of expectation $\tau$ times. Since $(1-x) \leq e^{-x}$ for all $x\in \R$, we obtain $\Pr[T> \rho] \leq e^{-\rho\delta}s_0/s_{\text{min}} \leq e^{-r}$. 
\end{proof}

For all other main drift theorems, including additive drift, variable drift, and lower tails for multiplicative drift, we need assumptions on the probability of large jumps. For example, consider the process on $\mathcal{S} = \{0,n\}$ in which $X_t=n$ has probability $1/n$ to jump to zero, and stays in $n$ otherwise. Then $X_t$ has drift one towards $0$, but the hitting time $T$ is geometrically distributed. In particular, $T$ is not concentrated.\footnote{For example, $\Pr[T > 2 E[T]] = (1-1/n)^{2n} \approx e^{-2}$.} So we need to make some assumption on the distribution of $|X_t-X_{t+1}|$. 

The easiest assumption is that large jumps do not occur at all, i.e. $|X_{t+1} - X_t| < c$ for some parameter $c$. This case occurs in various situation, for example for RLS, for some ant colony optimisation algorithms like the max-min ant system MMAS, or for the compact genetic algorithm cGA. We refer the reader to K\"otzing~\cite{kotzing14concentration} for a large collection of additive drift theorems with this assumption.

While there are situations without large jumps, there are even more cases in which large jumps may occur, but are unlikely. Thus research has focused on drift theorems with assumptions on the jump probability, usually some type of exponentially falling bounds, i.e., $\Pr[|X_{t+1} - X_t| > j] \leq c\cdot (1+\eta)^{-j}$ for some parameters $c,\eta >0$. In this chapter we stick with this type of condition, although generalisations are possible. K\"otzing has made the point that exponentially falling jump probabilities imply a sub-Gaussian distribution of $X_t -\eps t$, which is sufficient to derive most known tail bounds~\cite{kotzing2016concentration}.\footnote{and arguably more natural, using the Azuma-Hoeffding inequality.} Lehre and Witt have given a very general framework for drift theorems~\cite{lehre2013general,lehre2014concentrated}, in which only weak conditions on the exponential probability generating function $e^{\lambda (X_t-X_{t+1})}$ are needed.\footnote{more precisely, only the \emph{expectation} of this function needs to be bounded.} Most major drift theorems, including concentration bounds, can be derived from this framework, so that it arguably renders the other drift theorems unnecessary~\cite{lehre2013general}. However, researchers have continued to use specialised drift theorems, possibly because the framework by Lehre and Witt comes with a substantial technical overhead. We give their main theorem at the end of the section for quick reference, but discussing the relation to the other drift theorems is beyond the scope of this paper, and we refer the reader to the very nice exposition in~\cite{lehre2013general}.

Even with bounds on the probability of making jumps, lower tail bounds remain rather delicate. Unfortunately, it is \emph{not} true in general that the runtime is concentrated around the expectation. This problem occurs when the drift is too weak, as the following counterexample shows.

\begin{example}[Runtime is Not Concentrated Around Mean for Weak Drift]\label{driftexample:weakdrift}
We consider the following artificial random walk on the set $\{0,1\ldots,N\}$ for some (very large) constant $N$. We start in $X_0 = n$, where $n$ is much smaller than $N$. For $X_t=s$, with probability $1/n^{4}$ we make a step to the left, $X_{t+1} := X_{t}-1$, and otherwise we flip an unbiased coin to see whether we make a step to the left or to the right. We say that we do a \emph{biased step} in the first case, and an \emph{unbiased step} in the second.\footnote{We have neglected the border case $X_t = N$ in the description. However, if $N$ is large enough, e.g., $N = e^n$, then we cannot hit the right border in $o(N)$ steps, so the arguments are unaffected by the right border. For equation~\eqref{drifteq:multconc3} we need that the drift is also $1/n^4$ at the border.} Effectively, this process can be summarised as
\begin{align}\label{drifteq:multconc1}
X_{t+1} = \begin{cases} X_{t}-1& \text{ with probability $\tfrac12(1+1/n^4)$},\\  X_{t}+1 & \text{ with probability $\tfrac12(1-1/n^4)$}.\end{cases}
\end{align}
Then the drift is easily seen to be 
\begin{align}\label{drifteq:multconc2}
\Delta_{t}(s) = \frac{1}{n^4},
\end{align}
so that by the Additive Drift Theorem~\ref{driftthm:additive} we obtain
\begin{align}\label{drifteq:multconc3}
E[T] =  n^4.
\end{align}
So in terms of expectations, drift analysis can handle the problem quite well. However, it turns out that the expectation is completely misleading. Consider the first $n^3$ steps of the algorithm. By a union bound, with probability $1-O(1/n)$ all of these steps are unbiased. Hence, with high probability the first $n^3$ steps are given by an unbiased random walk, also known as a \emph{Gambler's Ruin Process}. This process is well-studied, and it is known that the probability to walk from $n$ to $0$ in at most $\alpha n^2$ steps is $1-O(\alpha^{-1/2})$ for all $\alpha >1$~\cite{grimmett2001probability}. In particular, with $\alpha = n$, the probability that an unbiased random walk starting in $n$ hits $0$ in at most $n^3$ steps is $1-O(n^{-1/2})$. Thus, with high probability the stopping time $T$ of our process satisfies $T= O(n^3)$.\footnote{In fact, being mathematically sloppy the ``typical case'' is $T=\Theta(n^2)$.} Hence, with high probability $T$ is asymptotically much smaller than its expectation $E[T] = n^4$.
\end{example}

This example is rather prototypical for situations with weak drift. In fact, it was shown in~\cite{doerr2018bounding} that in general\footnote{under some weak assumptions, in particular assuming that large step sizes are unlikely as in~\eqref{drifteq:addtail1} below.} for weak additive drift the value of $E[T]$ is not dominated by ``typical'' cases, but that at least a constant proportion of $E[T]$ comes from exceptional case in which $T$ is much larger than $E[T]$. We also remark that Example~\ref{driftexample:weakdrift} above can easily be adapted to multiplicative drift, e.g., by making the probability of an unbiased step $X_t/n^{10}$. Since $X_t$ changes in each step by at most one, by Theorem~\ref{driftthm:multiplicativelower} the bound $E[T] = O(n^{10}\log n)$ given by the Multiplicative Drift Theorem~\ref{driftthm:multiplicative} is tight up to constants factors. However, as before the runtime is $O(n^3)$ with high probability, so that with high probability the runtime is much smaller than the expected runtime.

Despite this problem, good tail bounds for additive drift have been developed. The following theorem follows by combining Theorems 10, 12, and 13 in~\cite{kotzing2016concentration}.\footnote{actually, the statement in \cite{kotzing2016concentration} is stronger since it states that \emph{at no point} during the whole process $X_t$ deviates substantially from its expectation, while we only consider $X_t$ that are relevant for the runtime.}
\begin{theorem}[Additive Drift, Tail Bounds, following~\cite{kotzing2016concentration}]
\label{driftthm:additivetail}
Let $(X_t)_{t\geq 0}$ be a sequence of non-negative random variables with a finite state space $\mathcal{S} \subseteq \R_0^+$ such that $0 \in \mathcal S$, and with associated filtration $\mathcal{F}_t$. Let $s_{\text{min}} := \min(\mathcal{S} \setminus \{0\})$, and let $T := \inf\{t \geq 0 \mid X_t =0\}$. 
Suppose that $X_0 = s_0$, and that there exist $\delta, \eta, r >0$ such that for all $s\in \mathcal{S}\setminus\{0\}$, all $j\in \N_0$, and all $t \geq 0$ the following conditions hold.
\begin{align}\label{drifteq:addtail1}
\Pr[|X_{t+1} - X_t| > j\mid \mathcal{F}_t] \leq \frac{r}{(1+\eta)^{j}}.
\end{align}
\begin{align}\label{drifteq:addtail2}
E[X_t - X_{t+1} \mid \mathcal{F}_t, X_t = s] \leq \delta.
\end{align}
Then, for all $x \geq 0$
\begin{align}\label{drifteq:addtail3}
\Pr\left[T \leq \frac{s_0-x}{\delta}\right] \leq \exp\left\{-\frac{\eta x}{8}\cdot \min\Big\{1, \frac{\eta^2\delta x}{32rs_0}\Big\}\right\} .
\end{align}
If instead of \eqref{drifteq:addtail2} we have 
\begin{align}\label{drifteq:addtail4}
E[X_t - X_{t+1} \mid \mathcal{F}_t, X_t = s] \geq \delta,
\end{align}
then
\begin{align}\label{drifteq:addtail5}
\Pr\left[T \geq \frac{s_0+x}{\delta}\right] \leq \exp\left\{-\frac{\eta x}{8}\cdot \min\Big\{1, \frac{\eta^2 \delta x}{32rs_0}\Big\}\right\} .
\end{align}
\end{theorem}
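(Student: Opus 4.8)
The plan is to reduce both tail bounds to a single Chernoff-type estimate for an exponential supermartingale built from the one-step changes $D_t := X_t - X_{t+1}$, and then to use the exponential jump bound~\eqref{drifteq:addtail1} to control the moment generating function of each increment. The two terms in the $\min$ will emerge from optimising the Chernoff parameter in two regimes, which is exactly the shape of a Bernstein/Freedman inequality for martingales with sub-exponential increments.

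First the reduction. For the lower tail~\eqref{drifteq:addtail3} I would set $M_t := (s_0 - X_t) - \delta t$. By~\eqref{drifteq:addtail2}, $E[M_{t+1}-M_t \mid \mathcal{F}_t] = E[D_t \mid \mathcal{F}_t] - \delta \le 0$, so $M_t$ is a supermartingale with $M_0 = 0$, and this survives absorption at $0$ (where each increment equals $-\delta$). Writing $t^\ast := \lfloor (s_0-x)/\delta \rfloor$, the event $\{T \le (s_0-x)/\delta\}$ forces $X_{t^\ast} = 0$, hence $M_{t^\ast} = s_0 - \delta t^\ast \ge x$; thus $\Pr[T \le (s_0-x)/\delta] \le \Pr[M_{t^\ast} \ge x]$, and crucially $t^\ast \le s_0/\delta$. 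For the upper tail~\eqref{drifteq:addtail5} I would instead use $M_t := X_t + \delta t$, which by~\eqref{drifteq:addtail4} is a supermartingale while $X_t > 0$; to handle absorption I pass to the stopped process $M_{t \wedge T}$. On $\{T \ge (s_0+x)/\delta\}$ the process is still positive at the relevant time $t^\ast \approx (s_0+x)/\delta$, so $M_{t^\ast \wedge T} = X_{t^\ast} + \delta t^\ast \ge \delta t^\ast$, giving $M_{t^\ast \wedge T} - M_0 \gtrsim x$ up to a single-step correction. Either way the task becomes to bound $\Pr[M_{t^\ast} - M_0 \ge x]$ for a supermartingale whose increments $U_t := M_{t+1} - M_t$ satisfy $E[U_t \mid \mathcal{F}_t] \le 0$ and $|U_t| \le \delta + |X_{t+1} - X_t|$.

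The core estimate is an MGF bound for the increments. Since $E[U_t \mid \mathcal{F}_t] \le 0$, I would use the elementary inequality $e^{\lambda u} \le 1 + \lambda u + \tfrac{\lambda^2}{2} u^2 e^{\lambda |u|}$ to get $E[e^{\lambda U_t} \mid \mathcal{F}_t] \le 1 + \tfrac{\lambda^2}{2} E[U_t^2 e^{\lambda |U_t|} \mid \mathcal{F}_t] \le \exp(C\lambda^2)$ for $0 < \lambda \le \lambda_0$. Here the jump bound~\eqref{drifteq:addtail1}, $\Pr[|X_{t+1} - X_t| > j \mid \mathcal{F}_t] \le r(1+\eta)^{-j}$, is what makes $E[U_t^2 e^{\lambda |U_t|} \mid \mathcal{F}_t]$ finite and bounded by a constant $C = \Theta(r/\eta^2)$, provided $\lambda_0 = \Theta(\eta)$ (say $\lambda_0 = \tfrac12 \ln(1+\eta)$) keeps $\lambda |U_t|$ below the tail rate. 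Multiplying these conditional bounds along the $t^\ast$ steps, via the tower property $E[e^{\lambda M_{t+1}}] = E[e^{\lambda M_t} E[e^{\lambda U_t} \mid \mathcal{F}_t]]$, yields $E[e^{\lambda(M_{t^\ast} - M_0)}] \le \exp(C\lambda^2 t^\ast)$.

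Finally, Markov's inequality gives $\Pr[M_{t^\ast} - M_0 \ge x] \le \exp(-\lambda x + C\lambda^2 t^\ast)$ for all $\lambda \in (0, \lambda_0]$. Optimising over $\lambda$ in two regimes produces the stated shape: if the unconstrained optimum $\lambda = x/(2Ct^\ast)$ lies in $(0, \lambda_0]$ we land in the sub-Gaussian regime with exponent $\Theta(x^2/(Ct^\ast)) = \Theta(\eta^2 \delta x^2/(r s_0))$, using $t^\ast \lesssim s_0/\delta$; otherwise we take $\lambda = \lambda_0$ and land in the linear regime with exponent $\Theta(\lambda_0 x) = \Theta(\eta x)$. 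Packaging these two as a single $\min$ recovers~\eqref{drifteq:addtail3} and~\eqref{drifteq:addtail5}. I expect the main obstacle to be the book-keeping in the core estimate: turning the pointwise exponential tail~\eqref{drifteq:addtail1} into the explicit bound $e^{C\lambda^2}$ with the right dependence of $C$ and $\lambda_0$ on $r$ and $\eta$, and then tracking the absolute constants (the $8$ and $32$, which are deliberately generous and leave room to absorb factors such as an $\eta \le 1$) through the two-regime optimisation. The reductions and the Chernoff step itself are routine once that lemma is in place.
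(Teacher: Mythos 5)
The paper itself gives no proof of this theorem---it only points to Theorems 10, 12 and 13 of K\"otzing's paper---so there is nothing to compare line by line; but your route (an exponential supermartingale built from $X_t\mp\delta t$, an MGF bound for the increments extracted from the jump condition~\eqref{drifteq:addtail1}, and a two-regime Chernoff optimisation over $\lambda\in(0,\Theta(\eta)]$ with variance proxy $\Theta(r/\eta^3)$ per step) is exactly the Bernstein/Azuma--Hoeffding argument that the cited source uses and that the surrounding text of the chapter alludes to. The lower-tail half is sound: your reduction gives a horizon $t^\ast\le s_0/\delta$, the absorption convention preserves the supermartingale property, and the optimisation produces precisely the shape of~\eqref{drifteq:addtail3}.

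The upper tail is where your ``book-keeping'' remark hides a genuine issue. There the horizon is $t^\ast\approx(s_0+x)/\delta$, not $s_0/\delta$, so your Chernoff bound yields an exponent $\Theta\big(\min\{\eta x,\;\eta^3\delta x^2/(r(s_0+x))\}\big)$, which matches~\eqref{drifteq:addtail5} only when $x=O(s_0)$ (the extra factor of at most $2$ is indeed absorbed by the generous constants). For $x\gg s_0$ your method provably cannot recover the stated bound---and in fact the stated bound appears to be false there, so this is not something the constants $8$ and $32$ can repair. Consider the $\pm 1$ random walk on $\{0,\ldots,N\}$ with $X_0=s_0=1$ and downward bias $\delta$; conditions~\eqref{drifteq:addtail1} and~\eqref{drifteq:addtail4} hold with $r=\eta=1$. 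Taking $x=1/\delta$, the right-hand side of~\eqref{drifteq:addtail5} is $e^{-\Theta(1/\delta)}$, whereas $\Pr[T\ge (s_0+x)/\delta]=\Pr[T\ge\Theta(1/\delta^2)]=\Theta(\delta)$ by the Gambler's-Ruin estimate quoted in Example~\ref{driftexample:weakdrift}, which exceeds $e^{-\Theta(1/\delta)}$ for small $\delta$. The correct conclusion of your argument is~\eqref{drifteq:addtail5} with $s_0$ replaced by $s_0+x$ in the denominator of the $\min$ (equivalently, the theorem restricted to $x\le s_0$, which covers every application where the bound gives concentration anyway). You should state this modification explicitly rather than asserting that the reduction ``recovers''~\eqref{drifteq:addtail5} as written.
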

Note that the bounds in Theorem~\eqref{drifteq:addtail3} and~\eqref{drifteq:addtail5} give only concentration if the right hand side is of the form $\exp\{-\Phi\}$ for a large term $\Phi$. In particular, consider the case that $\delta$ and $r$ are constants, and that $x = \Theta(s_0)$. Then $\Phi = \omega(1)$ if and only if the bound $s_0/\delta$ on the expected runtime satisfies $s_0/\delta = o(x^2)= o(s_0^2)$. On the other hand, for $s_0/\delta = \omega(s_0^2)$ the runtime bound from the drift is larger than the time that an unbiased random walk would need to hit $0$, cf. also Example~\ref{driftexample:weakdrift}. So it is not surprising that Theorem~\ref{driftthm:additivetail} does not give concentration in this regime. Tight concentration bounds for the regime of weak drift can be found in~\cite{kotzing2016concentration}.

We conclude the section by the tail bounds in the general framework of Lehre and Witt~\cite{lehre2013general,lehre2014concentrated}. Note that~\cite{lehre2013general,lehre2014concentrated} both contain also several corollaries that correspond to simplified special cases, in particular some cases which resemble more closely our variant of the Variable Drift Theorem.

\begin{theorem}[General Drift Theorem, Tail Bounds, adapted from~\cite{lehre2014concentrated}]\label{driftthm:generalDrift}
Let $a \geq 0$, let $(X_t)_{t\geq 0}$ be a sequence of random variables with a finite state space $\mathcal{S} \subseteq \R_0^+$ such that the interval $[0,a] \cap S$ is absorbing, and with associated filtration $\mathcal{F}_t$. Let $T_a := \inf\{t \geq 0 \mid X_t \leq a\}$, and assume $X_0 = s_0 >a$.
Moreover let $\lambda>0$, let $g: \R_0^+ \rightarrow \R_0^+$ be a function such that $g(0)=0$ and $g(s)\geq g(a)$ for all $s>a$, and let $\beta: \N \to \R^+$.
\begin{enumerate} 
\item If for all $t\geq 0$,
\begin{align}\label{drifteq:genthm1}
E[e^{-\lambda(g(X_t)-g(X_{t+1}))} \mid \mathcal F_t, X_t > 0] \leq \beta(t),
\end{align}
 then for all $t \geq 0$,
\begin{align}\label{drifteq:genthm2}
\Pr[T_a > t] < \left(\prod_{r=0}^{t-1}\beta(r)\right)e^{\lambda(g(s_0)-g(a))}.
\end{align}
\item If for all $t\geq 0$,
\begin{align}\label{drifteq:genthm3}
E[e^{\lambda(g(X_t)-g(X_{t+1}))} \mid \mathcal F_t, X_t > 0] \geq \beta(t),
\end{align}
 then for all $t \geq 0$,
\begin{align}\label{drifteq:genthm4}
\Pr[T_a < t] \leq \left(\prod_{r=0}^{t-1}\beta(r)\right)e^{-\lambda (g(s_0)-g(a))}.
\end{align}
\end{enumerate}
\end{theorem}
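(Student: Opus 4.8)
The natural approach is the exponential-potential (rescaled martingale) method: pass from $X_t$ to $Y_t := e^{\lambda g(X_t)}$ for part~(1) and to $Z_t := e^{-\lambda g(X_t)}$ for part~(2), so that the multiplicative one-step hypotheses~\eqref{drifteq:genthm1} and~\eqref{drifteq:genthm3} turn into supermartingale- resp.\ submartingale-type inequalities with the time-dependent factor $\beta(t)$. Concretely, since $X_t$ is $\mathcal F_t$-measurable and $e^{-\lambda(g(X_t)-g(X_{t+1}))}=Y_{t+1}/Y_t$, hypothesis~\eqref{drifteq:genthm1} reads $E[Y_{t+1}\mid\mathcal F_t, X_t>0]\le\beta(t)\,Y_t$; likewise $e^{\lambda(g(X_t)-g(X_{t+1}))}=Z_{t+1}/Z_t$, so~\eqref{drifteq:genthm3} reads $E[Z_{t+1}\mid\mathcal F_t, X_t>0]\ge\beta(t)\,Z_t$. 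Writing $B_t:=\prod_{r=0}^{t-1}\beta(r)$, the plan is to telescope these one-step bounds across $r=0,\dots,t-1$, and then convert the resulting inequality into a tail bound via Markov's inequality, using that $g(s)\ge g(a)$ installs a threshold of $e^{\pm\lambda g(a)}$ at the boundary of the absorbing region.

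For part~(1) I would work on the survival event $\{T_a>t\}$, which is the trick that sidesteps the absorbing region. Note that $\{T_a>t\}=\{X_0,\dots,X_t>a\}$ is $\mathcal F_t$-measurable and is contained in $\{X_t>a\}\subseteq\{X_t>0\}$, so the hypothesis applies on it and the (otherwise unconstrained) behaviour of $g$ on $[0,a]$ never enters. Since $\mathbf{1}[T_a>t+1]\le\mathbf{1}[T_a>t]$ and $Y_{t+1}>0$, conditioning on $\mathcal F_t$ and invoking the supermartingale bound gives $E[Y_{t+1}\mathbf{1}[T_a>t+1]]\le E[\mathbf{1}[T_a>t]\,E[Y_{t+1}\mid\mathcal F_t]]\le\beta(t)\,E[Y_t\mathbf{1}[T_a>t]]$. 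Iterating from $t=0$, where $Y_0=e^{\lambda g(s_0)}$ is deterministic and $\mathbf{1}[T_a>0]=1$, yields $E[Y_t\mathbf{1}[T_a>t]]\le B_t\,e^{\lambda g(s_0)}$. Finally, on $\{T_a>t\}$ we have $X_t>a$, hence $g(X_t)\ge g(a)$ and $Y_t\ge e^{\lambda g(a)}$, so Markov's inequality gives $e^{\lambda g(a)}\Pr[T_a>t]\le E[Y_t\mathbf{1}[T_a>t]]\le B_t\,e^{\lambda g(s_0)}$, which rearranges to~\eqref{drifteq:genthm2}.

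Part~(2) is the mirror image, using $Z_t=e^{-\lambda g(X_t)}$ and the reversed one-step inequality $E[Z_{t+1}\mid\mathcal F_t]\ge\beta(t)\,Z_t$, telescoped into the same product $B_t$ and closed by Markov's inequality on the \emph{early-absorption} event $\{T_a<t\}$; the relevant observation is that $Z_t$ is large exactly when $g$ is small, i.e.\ when the process is near or inside the target, while $g(0)=0$ keeps $Z\le 1$ throughout. I expect the main obstacle to lie precisely here, for two related reasons. First, the hypothesis is now a \emph{lower} bound on an exponential moment, so it produces a submartingale-type object; turning this into an \emph{upper} bound on a probability requires orienting the telescoping and the final Markov (or Doob maximal-inequality) step correctly, rather than by blind analogy with part~(1). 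Second, and more seriously, part~(2) cannot be confined to a survival event, since the bound concerns the process actually reaching $[0,a]$; one must therefore control $Z_{T_a}=e^{-\lambda g(X_{T_a})}$ at the moment of absorption, where the hypotheses only guarantee $g\ge 0$. Handling the stopped process $X_{t\wedge T_a}$ and the crossing into the region where $g$ is otherwise unconstrained is the delicate heart of the argument; once the boundary threshold $e^{-\lambda g(a)}$ is correctly installed, it combines with the telescoped estimate $B_t\,e^{-\lambda g(s_0)}$ to give~\eqref{drifteq:genthm4}.
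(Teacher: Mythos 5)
The paper itself offers no proof of Theorem~\ref{driftthm:generalDrift}; it is quoted for reference from~\cite{lehre2014concentrated}, so your attempt has to stand on its own. Your part~(1) does: passing to $Y_t=e^{\lambda g(X_t)}$, reading \eqref{drifteq:genthm1} as $E[Y_{t+1}\mid\mathcal F_t,X_t>0]\le\beta(t)\,Y_t$, restricting to the $\mathcal F_t$-measurable survival event $\{T_a>t\}\subseteq\{X_t>a\}\subseteq\{X_t>0\}$, telescoping to $E\bigl[Y_t\mathbf 1[T_a>t]\bigr]\le\bigl(\prod_{r=0}^{t-1}\beta(r)\bigr)e^{\lambda g(s_0)}$, and finishing with Markov's inequality at the threshold $e^{\lambda g(a)}$ is a correct and complete argument (you get ``$\le$'' where \eqref{drifteq:genthm2} states ``$<$''; that is immaterial). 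This is exactly the exponential-supermartingale route of the cited source.

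Part~(2) is where the genuine gap lies, and you have described the obstacle without resolving it. Two concrete problems. First, your proposed mechanism cannot close: a \emph{lower} bound $E[Z_{t+1}\mid\mathcal F_t]\ge\beta(t)Z_t$ telescopes only to a lower bound on $E[Z_t\cdot(\cdots)]$, and Markov's inequality turns \emph{upper} bounds on expectations into upper bounds on probabilities, never the reverse; so ``telescope and close by Markov'' is directionally broken from the start. Second, and this explains why you are stuck: part~(2) as printed is false. The left-hand side of \eqref{drifteq:genthm3} is strictly positive, so for any process the hypothesis holds with $\beta\equiv\eps$ for every sufficiently small $\eps>0$, and \eqref{drifteq:genthm4} would then force $\Pr[T_a<t]=0$ for all $t$; a two-state chain on $\{0,1\}$ with $a=0$, $g=\mathrm{id}$, $\lambda=1$ that jumps to $0$ with probability $1/2$ each step violates this already at $t=2$. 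The inequality in \eqref{drifteq:genthm3} must read ``$\le\beta(t)$'', as in Lehre and Witt's part~(iv): an upper bound on the exponential moment of the \emph{downward} movement $g(X_t)-g(X_{t+1})$. With that orientation, $e^{-\lambda g(X_t)}/\prod_{r<t}\beta(r)$ is a nonnegative supermartingale along surviving paths and part~(2) really does mirror part~(1), closed by optional stopping or Doob's maximal inequality at the threshold $e^{-\lambda g(a)}$. Your remaining worry about the absorption step is also legitimate: with only $g\ge0$ on $(0,a]$ one cannot bound $e^{-\lambda g(X_{T_a})}$ from below, so one needs either $a=0$ (where $g(0)=0=g(a)$ settles it) or the implicit assumption $g(s)\le g(a)$ on the absorbing set, which holds for the monotone rescalings used in practice.
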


In general, in order to obtain tail bounds for variable drift, we can either apply Theorem~\ref{driftthm:generalDrift}. Or alternatively we can rescale $X_t$, as discussed in Section~\ref{driftsec:variable}, to turn variable drift into additive drift, and then apply Theorem~\ref{driftthm:additivetail}. Unfortunately, both approaches tend to be considerably technical. The most important case is to obtain tight lower tail bounds for multiplicative drift. Even with the framework of Lehre and Witt, in order to derive lower tail bounds for the \ooea on \om, it is still necessary to split the process into phases of relatively constant drift~\cite{lehre2013general}. An easy and comprehensive lower tail bound for multiplicative drift is yet missing in the literature.


\subsection{Negative Drift}\label{driftsec:negative}

If the drift does not point towards zero, but rather it points with a constant rate away from zero, then it takes exponential time to cross an interval. The first theorem of this type was proven by Oliveto and Witt~\cite{Oli-Wit:j:11:negativeDrift,oliveto2012erratum}, following Hajek's classical work~\cite{hajek1982hitting}. We give a formulation close to~\cite{rowe2014choice,lengler2016drift} because it avoids $o$-notation for the length of the interval. Explicit constants can be found in~\cite{oliveto2015improved,kotzing2016concentration,Witt2017upper}.

\begin{theorem}[Negative Drift, following~\cite{Oli-Wit:j:11:negativeDrift,oliveto2012erratum,rowe2014choice,lengler2016drift}]\label{driftthm:negativeDrift}
For all $a,b,\delta,\eta,   r  >0$, with $a<b$, there is $c>0, n_0 \in \N$ such that the following holds for all $n\geq n_0$. Suppose $(X_t)_{t\geq 0}$ is a sequence of random variables with a finite state space $\mathcal{S} \subseteq \R_0^+$, and with associated filtration $\mathcal{F}_t$. Assume $X_0\geq bn$, and let $T_a := \min\{t \geq 0 \mid X_t \leq an\}$ be the hitting time of $\mathcal{S} \cap [0,an]$. Assume further that for all $s\in \mathcal{S}$ with $s>an$, for all $j\in \N_0$, and for all $t \geq 0$ the following conditions hold.
\begin{align}\label{drifteq:negthm1}
E[X_{t}-X_{t+1} \mid \mathcal{F}_t, X_t =s ] \leq - \delta.
\end{align}
\begin{align}\label{drifteq:negthm2}
\Pr[|X_{t}-X_{t+1}| \geq j \mid \mathcal{F}_t, X_t =s] \leq \frac{r}{(1+\eta)^{j}}.
\end{align}
Then 
\begin{align}\label{drifteq:negthm3}
\Pr[T_a \leq e^{cn}] \leq e^{-cn}.
\end{align}
\end{theorem}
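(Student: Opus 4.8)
The plan is to follow Hajek's method of an exponential potential. Since the drift points \emph{away} from the target region $[0,an]$, I would rescale $X_t$ by an exponentially decreasing function: fix a small parameter $\lambda>0$ (to be chosen later, depending only on $\delta,\eta,r$) and set $Y_t := e^{-\lambda X_t}$. The purpose of this transformation is that hitting the target, $X_t \leq an$, is equivalent to $Y_t \geq e^{-\lambda an}$, whereas the starting condition $X_0\geq bn$ gives $Y_0 \leq e^{-\lambda bn}$; thus reaching the target forces the potential to grow by the factor $e^{\lambda(b-a)n}$, which is exponentially large in $n$. Bounding the probability of such an increase is what the argument comes down to.

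The heart of the proof is a one-step estimate showing that, as long as $X_t>an$, the potential does not grow in expectation. Writing $\Delta := X_t-X_{t+1}$, I would compute
\begin{align*}
E[Y_{t+1}\mid \mathcal{F}_t, X_t=s] \;=\; e^{-\lambda s}\,E\!\left[e^{\lambda\Delta}\mid \mathcal{F}_t, X_t=s\right],
\end{align*}
so everything reduces to bounding the moment generating function $E[e^{\lambda\Delta}]$. Here both hypotheses enter: the negative drift~\eqref{drifteq:negthm1} supplies the linear term $\lambda E[\Delta]\leq -\lambda\delta$, while the exponential tail bound~\eqref{drifteq:negthm2} guarantees that for $\lambda<\ln(1+\eta)$ the increment $\Delta$ is sub-exponential, so the remainder of the Taylor expansion of $e^{\lambda\Delta}$ is $O(\lambda^2)$ with a constant depending only on $r$ and $\eta$. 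Consequently $E[e^{\lambda\Delta}]\leq 1-\lambda\delta+C\lambda^2$, and choosing $\lambda$ small enough (independently of $n$) makes this at most $1$. Hence $E[Y_{t+1}\mid\mathcal{F}_t]\leq Y_t$ whenever $X_t>an$, so the stopped process $Y_{t\wedge T_a}$ is a non-negative supermartingale.

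From here I would finish with a union bound, which produces exactly the claimed $e^{cn}$ form. The supermartingale property yields $E[Y_{t\wedge T_a}]\leq E[Y_0]\leq e^{-\lambda bn}$ for every $t$. Since $X_t\leq an$ forces $Y_{t\wedge T_a}\geq e^{-\lambda an}$, Markov's inequality gives $\Pr[X_t\leq an]\leq e^{-\lambda(b-a)n}$ for each individual step. Summing over the first $t^\ast:=\lceil e^{cn}\rceil$ steps,
\begin{align*}
\Pr[T_a\leq t^\ast]\;\leq\;\sum_{t=0}^{t^\ast}\Pr[X_t\leq an]\;\leq\;(t^\ast+1)\,e^{-\lambda(b-a)n},
\end{align*}
and choosing $c:=\lambda(b-a)/3$, together with $n_0$ large enough to absorb the lower-order factor, makes the right-hand side at most $e^{-cn}$, as required.

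The step I expect to be the main obstacle is the moment-generating-function estimate: one must bound $E[e^{\lambda\Delta}]$ \emph{uniformly} over all states $s>an$ and all histories $\mathcal{F}_t$, controlling in particular a possibly large positive jump $\Delta$ (a big move toward the target) through the exponential tail, and then fixing the constants in the correct order---first $\lambda=\lambda(\delta,\eta,r)$ to force $E[e^{\lambda\Delta}]\leq 1$, and only afterwards $c=c(\lambda,a,b)$. Some care is also needed at the boundary, since the transformed process is a supermartingale only while $X_t>an$; this is precisely why one works with the stopped process $Y_{t\wedge T_a}$ rather than with $Y_t$ itself.
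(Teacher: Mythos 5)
Your proposal is correct: the exponential potential $Y_t=e^{-\lambda X_t}$, the moment-generating-function bound $E[e^{\lambda\Delta}]\leq 1-\lambda\delta+C\lambda^2\leq 1$ for $\lambda$ small depending only on $\delta,\eta,r$, and the stopped-supermartingale plus Markov argument constitute precisely the standard proof of this theorem in the cited sources (the chapter itself states the result without proof). The only remark worth making is that the union bound is redundant: since $\{T_a\leq t^\ast\}$ implies $Y_{t^\ast\wedge T_a}\geq e^{-\lambda an}$, a single application of Markov's inequality to the stopped process already gives $\Pr[T_a\leq t^\ast]\leq e^{-\lambda(b-a)n}$, which lets you take $c$ closer to $\lambda(b-a)$.
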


Negative drift is helpful for proving lower bounds~\cite{rowe2014choice,oliveto2015improved,lengler2016drift}, but not only so. It may also be used to show that an algorithm stays in a desired parameter regime. For example, Neumann, Sudholt, and Witt used it to show that an Ant Colony Optimisation (ACO) algorithm has good runtime because all pheromone values stay in a desirable range~\cite{neumann2010few}. Similarly, K\"otzing and Molter~\cite{kotzing2012aco}, as well as subsequent work~\cite{lissovoi2015runtime,friedrich2016robustness,lissovoi2016mmas} used negative drift to show that ACO algorithms tend to stay close to the optimum, thus enabling the algorithm to follow the optimum in a dynamically changing environment. In a different setting, Sudholt and Witt~\cite{sudholt2016update} show that the compact Genetic Algorithm cGA is efficient on \om\footnote{in some parameter regimes} because for each position the probability to sample a one-bit never becomes too low. Similar ideas have been applied for population-based non-elitist algorithms in the Strong Selection Weak Mutation (SSWM) regime~\cite{paixao2017towards}.

\subsection{Populations}\label{driftsec:populations}
If the algorithm uses population sizes larger than one, or if it does not work at all with populations, like Ant Colony Optimisation (ACO) or Estimation of Distribution Algorithms (EDAs),\footnote{ACO algorithms maintain pheromone values, EDAs maintain a probability distribution, rather than a population of search points.} then it is often challenging to find a single potential $X_t$ which captures well the quality of the current population. As before, \emph{if} such a potential can be found then drift analysis can take care of the rest. In some cases, it suffices to consider the current best optimum as potential (Example~\ref{driftexample:island},~\cite{doerr2017island}), or some average quality~\cite{friedrich2015benefit,sudholt2016update}. A systematic approach was developed by Corus, Dang, Eremeev, and Lehre~\cite{corus2014level,corus2017level}, who gave the so-called Level-Based Theorem for population-based algorithms. With this theorem, they have identified a generic situation in which a good potential can be found automatically. A population-based algorithm in their sense\footnote{conflicting terminology exists.} is any algorithm of the following form. In each round it maintains a population of size $\lambda$, and from this population it generates some probability distribution $\mathcal{D}$. For the next round, it produces independently $\lambda$ samples from $\mathcal D$, which form the next generation.

This framework of population-based algorithms applies to many situations, often with a twist to the usual algorithm description. Firstly, it does include all $(\mu,\lambda)$-evolutionary or genetic algorithms if the $\lambda$ offsprings are generated independent of each other. In this case, let $P_i$ be the $i$-th offspring population.\footnote{\emph{not} the parent population, since from these the next parents are not sampled independently. Rather, the parents of the next generation need to compete with each other in the selection step.} Then from $P_i$ a complex process determines some probability distribution $\mathcal{D}$ from which the next offspring is sampled. This process subsumes selection and mutation/crossover. Other population-based algorithms include Simulated Annealing, and, surprisingly, EDAs~\cite{dang2015simplified}. While these latter algorithms conceptually maintain a probability distribution rather than a population, they do produce a sample population in each round, from which the next distribution is computed. This offspring population makes them fit into the framework of population-based algorithms. 
 
The Level-Based Theorem assumes a partitioning of the search space into fitness levels that need to be climbed by the population. It gives an upper bound on the expected runtime if certain conditions are satisfied. The exact formulation is rather technical, so we refer the reader to~\cite{corus2017level}. Qualitatively, three ingredients are required:
\begin{enumerate}
\item If part of the population has at least fitness level $i$, then the probability to sample an offspring at level $i+1$ is sufficiently large. 
\item The fraction of the population which has fitness level at least $i$ increases in expectation.
\item The population size is large enough.
\end{enumerate}
Although it was only recently developed, the Level-Based Theorem has already found quite a number of applications, including the analysis of genetic algorithms with a multitude of selection mechanisms and benchmark functions~\cite{corus2017level}, EDAs~\cite{dang2015simplified,lehre2017improved}, the analysis of self-adaptive algorithms~\cite{dang2016self}, and of algorithms in situations that are dynamic~\cite{dang2017populations}, noisy~\cite{dang2015efficient}, or provide only partial information~\cite{dang2016runtime}. 
%
%
%

\section{Finding the Potential Function}\label{driftsec:potential}

At the very beginning of the chapter, we have listed three ingredients for runtime analysis via drift theory: finding a good potential function, computing the drift, and transferring the knowledge about the drift into information about the runtime. In this chapter, we have discussed the third point, because it is based on a universal technique that applies to many settings. In contrast, the first two points are highly problem-dependent, and cannot be generalised well. As mentioned before, the second point is usually not the hardest part, though it is often the most technical part and sometimes tedious. On the other hand, the first task -- finding a good potential function -- is often the hardest part, and it requires a lot of insight into the problem. Unfortunately, it is difficult to give general advice on how to find an appropriate fitness function for a given problem. Nevertheless, we will try to give some approaches which may be helpful.

A first question may be whether drift analysis is always applicable, or whether there are cases where the method fails completely. More concretely: is there always a good potential function, ideally one with constant drift? The answer is pleasantly clear: ``Yes''. In theory, there is even a surprisingly simple answer to the question how this potential may look like. We may always choose the \emph{canonical potential} $X_t := E[T \mid \mathcal F_t]-t$, where $\mathcal F_t$ is the history of the algorithm up to time $t$. Note that $T$ is as usual the total number of steps of the process, it is not just the number of remaining steps. For the canonical potential we always get a drift of exactly $1$, for rather trivial reasons~\cite{HeYao:04:drift,DoerrJW12}. The canonical potential does not look very helpful, since it seemingly only helps finding the runtime if we already know the runtime. 
However, the canonical potential gives us a natural candidate for the right potential function if we have any \emph{guess} on what the runtime might be. The guess may come from heuristic considerations or from simulations. The situation resembles induction, where finding the right induction hypothesis is sometimes much harder than actually proving the inductive step. With the random decline in Example~\ref{driftexample:randomdecline} we have already seen a case where the situation was obscure, but after the right scaling $Y_t = 1+ \ln(X_t)$ it was rather easy to check that the drift is constant. How do we get to such a scaling? Re-inspecting the example, we find that it is very natural to guess that the runtime is logarithmic, so a scaling of the form $Y_t = c_1 +c_2\log (X_t)$ is a natural candidate. Indeed, every scaling of this form would have been sufficient. Choosing $c_1 = c_2 = 1$ was just the most convenient choice, due to the fact that then $Y_t = 1$ if and only if $X_t = 1$. We have seen other examples of the rescaling technique in Example~\ref{driftexample:RLSLO} and in the Variable Drift Theorem.

Note that the canonical potential is more than ``just'' a rescaling technique, since it defines $X_t$ from scratch. In particular, we can theoretically compute the expected runtime for \emph{every random process}\footnote{if the expected runtime is finite. However, the process does not need to have finite, or bounded, or discrete search spaces.} by drift analysis, by using the canonical potential. In practice, the main problem is that the history $F_t$ (or even the current state) is too complicated to work with, and likewise the canonical potential is often too complex too handle. Therefore, the art of drift analysis lies in finding a potential which is simple and manageable, but which still resembles the canonical potential.\smallskip

Let us consider next the (quite realistic) scenario that we already have some candidate for a potential function, but that this candidate is still not good enough. Let us first discuss what it means that that a potential function is ``not good''. If we want additive drift, it means that there are different states $s_1,s_2$ of the algorithm with very different drift. If we want multiplicative drift, it means that the ratio between drift and potential is very different for some states $s_1,s_2$, because we want the drift to be proportional to the potential. So the first task is to look for states with such discrepancies. Then we can try to repair this defect: if the drift at $s_1$ is too large (compared to the drift at $s_2$) then we must try to decrease the difference of the potential of $s_1$ and the potentials of typical successor states of $s_1$. We can do this either by decreasing the potential of $s_1$, or by increasing the potentials of successor states. Hopefully, this will improve the accuracy of the potential function. We may iterate this procedure until we arrive at a good potential function.

For concreteness, let us study this approach for an example. Consider the \ooea with standard mutation rate $1/n$ for minimising \BinVal, where $\BinVal(x) = \sum_{i=1}^n 2^{n-i}x_i$. Our first guess is to use the fitness function as potential, $X_t := \BinVal(x^{(t)})$. Our hope is that we get multiplicative drift, as we got for RLS in Example~\ref{driftexample:RLSBinVal}. However, with this potential we have two problems. Firstly, the potential may make huge jumps (e.g., decrease from $2^{n-1}$ to $0$), so we should be careful when applying the multiplicative drift theorem, as we saw in Examples~\ref{driftexample:RLSshortcuts} and~\ref{driftexample:RLSBinVal}. Secondly, the drift is not very close to multiplicative. For example, consider the search points $s_1 := (0,\ldots,0,1)$ and $s_2 := (1,0,\ldots,0)$. The potentials are $x_1 := 1$ and $x_2:=2^{n-1}$, respectively. A mutation of $s_1$ is only accepted if it flips the last bit, and no other bit, which happens with probability $\approx 1/(en) = x_1/(en)$. On the other hand, for $s_2$ we accept every mutation that flips the first bit, which happens with probability $1/n$. We may also flip a few other bits, but the potential still goes down by $(1-o(1))2^{n-1}$ in expectation if we flip the first bit. Therefore the drift is $\approx 2^{n-1}/n = x_2/n$. So the drift for $s_2$ is by factor $e$ larger than desired, if we compare it to $s_1$. Hence, we should try to decrease the potential for $s_2$ and/or increase it for $s_1$. A natural way to do this is to reduce the weight of the higher-order bits for computing the potential. This might also alleviate the effects of large jumps.

How much should we reduce the weight? In the extreme case, we would make all weights equal, i.e., we would use the \onemax potential. This works well on strings where the higher-order bits are all zero. For example, for $s_2$ we get a drift of $1/n$, and the ratio between drift and fitness is generally very close to $1/n$ if all one-bits are in the last, say, 10\% of the string. However, there is a problem for $s_1$. Here we accept an offspring whenever we flip the first bit, and in this case we flip an expected number of $(n-1)/n$ other bits. Therefore, the drift for $s_2$ is $1/n \cdot (1- (n-1)/n) = 1/n^2$, so it is too small. Hence we should increase the potential of $s_2$ compared to the potential of its typical offspring, i.e., we should increase the weight of higher-order bits. It takes some fiddling to get the right tradeoff, but Doerr, Johannsen, and Winzen figured out that a good choice is a weight of $5/4$ for the first half of the bits, and of $1$ for the second half of the bits~\cite{DoerrJW10g}. This choice works not only for \BinVal, but for all linear functions, cf.~Example~\ref{driftexample:omlinear}. In principle, the same approach can also be used for other mutation rates than $1/n$. In one of the most important results on the theory of evolutionary algorithms, for a mutation rate of the form $c/n$ for any constant $c>0$ Witt~\cite{witt2013tight} managed to find weights which lead to a good potential. In this way, he could prove in just $2$-$3$ pages that the runtime of the \ooea is $(1\pm o(1))\tfrac{e^{c}}{c} n \ln n$, settling a question that had been open for years.

We should keep in mind that the methods discussed above are only guidelines, which may be helpful in some situations, but fruitless in others. Finding the right drift function often requires ingenuity, and cannot be reduced to a simple cooking recipe. Thus it is still one of the most challenging, but also most rewarding tasks in runtime analysis. 

\section{Conclusion}\label{driftsec:conclusion}

We have seen how drift analysis can be applied to transform knowledge about the drift into knowledge about the runtime of an algorithm. In this chapter we have restricted ourselves to applications in the analysis of evolutionary algorithms, but drift analysis can be applied to other randomised algorithms or random processes. We refer the reader to~\cite{gobel2018intuitive}, which contains a nice variety of applications of drift analysis, including algorithms for approximate vertex cover, 2-SAT, and random sorting, and to processes like the Moran process.

In theory it is always possible to apply drift analysis to obtain matching upper and lower bound on the expected runtime. However, in practice there are many situations which are still difficult to handle because we do not know a good potential function. In particular, the more complex the state space and the behaviour of the algorithms are, the more difficult it is to find a single real-valued function which is a sufficiently good measure of the progress. For example, in Genetic Programming (GP) the states are trees instead of strings, which makes the situation considerably more complex. In the few cases where theoretical results exist, this is mostly because the tree structure is unimportant for the problem~\cite{Durrett11computational,neumann2012computational,doerr2017bounding,kotzing2018destructiveness}, with the notable exception of~\cite{doerr2018evolution}. Similarly, while there have been impressive advances for large population sizes, especially through the Level-Based Theorem (see Section~\ref{driftsec:populations}), these techniques are still limited to some special cases of population dynamics. In particular, they only consider the number of individuals on each fitness level. This limits the complexity of interactions that we can understand with this method -- for example, the approach is blind to beneficial crossovers that happen between search points on the same fitness levels. In general, it remains a major challenge to apply drift analysis to complex state spaces, and to algorithms which maintain and utilise a large diversity within their population, for example through crossover.
 
\bibliographystyle{alpha}

\newcommand{\etalchar}[1]{$^{#1}$}

\end{document}